\newtheorem{theorem}{Theorem}
\newtheorem{definition}{Definition}
\newtheorem{lemma}{Lemma}
\newtheorem{fact}[lemma]{Fact}
\newtheorem{conjecture}{Conjecture}
\newtheorem{remark}[lemma]{Remark}
\newcommand{\1}[1]{\Ind\left[#1\right]} 
\newcommand{\Afree}{A^{\textrm{free}}}
\newcommand{\Aone}{A^{1}}
\newcommand{\Azero}{A^{0}}
\newcommand{\D}{\mathcal{D}} 
\newcommand{\eps}{\varepsilon} 
\newcommand{\Ex}[2]{\operatorname*{\mathbb{E}}_{#1}\left[#2\right]} 
\newcommand{\G}{\mathscr{G}} 
\renewcommand{\P}{\mathcal{P}} 
\newcommand{\puritygain}{\G\text{-}\mathrm{purity}\text{-}\mathrm{gain}}
\newcommand{\pr}[2]{\Pr_{#1}\left[#2\right]} 
\newcommand{\R}{\mathbb{R}} 
\newcommand{\sym}{\operatorname*{\triangle}} 
\newcommand{\zo}{\{0,1\}}
\newcommand{\Ind}{\mathds{1}}
\def\colorful{1}
\begin{document}
\title{Decision tree heuristics can fail, even in the smoothed setting
 \vspace{15pt}}

\author{Guy Blanc \vspace{8pt} \\ \hspace{-5pt}{\sl Stanford} \and \hspace{10pt} Jane Lange \vspace{8pt} \\
\hspace{4pt}  {\sl MIT}
\and Mingda Qiao \vspace{8pt}\\ \hspace{-8pt} {\sl Stanford} 
\and Li-Yang Tan \vspace{8pt} \\ \hspace{-8pt} {\sl Stanford}}  

\newcommand{\gnote}[1]{\footnote{{\bf \color{red}Guy}: {#1}}}
\newcommand{\jnote}[1]{\footnote{{\bf \color{blue}Jane}: {#1}}}

\date{\vspace{15pt}\small{\today}}

\maketitle

\begin{abstract} 
    Greedy decision tree learning heuristics are mainstays of machine learning practice, but theoretical justification for their empirical success remains elusive. In fact, it has long been known that there are simple target functions for which they fail badly (Kearns and Mansour, STOC 1996).  

Recent work of Brutzkus, Daniely, and Malach (COLT 2020) considered the smoothed analysis model as a possible avenue towards resolving this disconnect.  Within the smoothed setting and for targets $f$ that are $k$-juntas, they showed that these heuristics successfully learn~$f$ with depth-$k$ decision tree hypotheses.  They conjectured that the same guarantee holds more generally for targets that are depth-$k$ decision trees.

We provide a counterexample to this conjecture: we construct targets that are depth-$k$ decision trees and show that even in the smoothed setting, these heuristics build trees of depth $2^{\Omega(k)}$ before achieving high accuracy.  We also show that the guarantees of Brutzkus et al.~cannot extend to the agnostic setting: there are targets that are very close to $k$-juntas, for which these heuristics build trees of depth $2^{\Omega(k)}$ before achieving high accuracy. 
\end{abstract}

\thispagestyle{empty}

\newpage
\setcounter{page}{1}


\section{Introduction}
Greedy decision tree learning heuristics are among the earliest and most basic algorithms in machine learning.  Well-known examples include ID3~\cite{Qui86}, its successor C4.5~\cite{Qui93}, and CART~\cite{BFSO84}, all of which continue to be widely employed in everyday ML applications.  These simple heuristics build a decision tree for labeled dataset $S$ in a greedy, top-down fashion.  They first identify a ``good" attribute to query as the root of the tree.  This induces a partition of $S$ into $S_0$ and $S_1$, and the left and right subtrees are built recursively using $S_0$ and $S_1$ respectively.  

In more detail, each heuristic is associated with an {\sl impurity function} $\mathscr{G} : [0,1]\to [0,1]$ that is concave, symmetric around $\frac1{2}$, and satisfies $\mathscr{G}(0) = \mathscr{G}(1) = 0$ and $\mathscr{G}(\frac1{2}) = 1$.  Examples include the binary entropy function $\mathscr{G}(p) = \textnormal{H}(p)$ that is used by ID3 and C4.5, and the Gini impurity function $\mathscr{G}(p) = 4p(1-p)$ that is used by CART;  Kearns and Mansour~\cite{KM96} proposed and analyzed the function $\mathscr{G}(p) = 2\sqrt{p(1-p)}$.   For a target function $f : \R^n \to \zo$ and a distribution $\mathcal{D}$ over $\R^n$, these heuristics build a decision tree hypothesis for $f$ as follows: 
\begin{enumerate} 
\item {\sl Split:} Query $\Ind[x_i \ge \theta]$ as the root of the tree, where $x_i$ and $\theta$ are chosen to (approximately) maximize the {\sl purity gain with respect to $\mathscr{G}$}: 
\[ \puritygain_{\D}(f,x_i) \coloneqq \mathscr{G}(\Ex{}{f}) - \big(\pr{}{x_i \ge \theta} \cdot \mathscr{G}(\Ex{}{f_{x_i \ge \theta}}) + \pr{}{x_i < \theta} \cdot \mathscr{G}(\Ex{}{f_{x_i < \theta}})\big),\]
where the expectations and probabilities above are with respect to randomly drawn labeled examples $(x,f(x))$ where $x \sim \mathcal{D}$, and $f_{x_i\ge \theta}$ denotes the restriction of $f$ to inputs satisfying $x_i\ge \theta$ (and similarly for $f_{x_i<\theta}$).
\item {\sl Recurse:}  Build the left and right subtrees by recursing on $f_{x_i \ge \theta}$ and $f_{x_i < \theta}$ respectively.  
\item {\sl Terminate:} The recursion terminates when the depth of the tree reaches a user-specified depth parameter.  Each leaf $\ell$ of the tree is labeled by $\mathrm{round}(\Ex{}{f_\ell})$, where we associate $\ell$ with the restriction corresponding to the root-to-$\ell$ path within the tree and $\mathrm{round}(p) \coloneqq \Ind[p\ge \frac1{2}]$.
\end{enumerate}

Given the popularity and empirical success of these heuristics\footnote{CART and C4.5 were named as two of the ``Top 10 algorithms in data mining" by the International Conference on Data Mining (ICDM) community~\cite{Top10}; other algorithms on this list include $k$-means, $k$-nearest neighbors, Adaboost, and PageRank, all of whose theoretical properties are the subjects of intensive study.  C4.5 has also been described as ``probably the machine learning workhorse most widely used in practice to date"~\cite{WFHP16}.}, it is natural to seek theoretical guarantees on their performance: 
\begin{quote} {\sl Let $f : \mathbb{R}^n \to \{0,1\}$ be a target function and $\mathcal{D}$ be a distribution over $\mathbb{R}^n$. Can we obtain a high-accuracy hypothesis for $f$ by growing a depth-$k'$ tree using these heuristics, where~$k'$ is not too much larger than $k$, the optimal decision tree depth for~$f$?} \hfill{($\diamondsuit$)} 
\end{quote} 

\subsection{Background and prior work}

\paragraph{A simple and well-known impossibility result.} Unfortunately, it has long been known~\cite{KM96,Kea96} that no such guarantee is possible even under favorable feature and distributional assumptions.  Consider the setting of binary features (i.e.~$f : \zo^n\to \zo$) and the uniform distribution $\mathcal{U}$ over $\zo^n$, and suppose $f$ is the parity of two unknown features $x_i \oplus x_j$ for $i,j\in [n]$.   It can be easily verified that for all impurity functions $\mathscr{G}$, all features have the same purity gain:  $\puritygain_{\mathcal{U}}(f,x_\ell) = 0$ for all $\ell\in [n]$, regardless of whether $\ell \in \{i,j\}$.  Therefore,  these heuristics may build a tree of depth $\Omega(n)$, querying irrelevant variables $x_\ell$ where $\ell \notin \{i,j\}$, before achieving any nontrivial accuracy.  This is therefore an example where the target $f$ is computable by a decision tree of depth $k = 2$, and yet these heuristics may build a tree of depth $k' = \Omega(n)$ before achieving any nontrivial accuracy. 

\paragraph{Smoothed analysis.}  In light of such impossibility results, a line of work has focused on establishing provable guarantees for restricted classes of target functions~\cite{FP04,Lee09,BDM19,BLT-ITCS,BLT-ICML}; we give an overview of these results in Section~\ref{sec:related}. 

The focus of our work is instead on {\sl smoothed analysis} as an alternative route towards evading these impossibility results, an approach that was recently considered by Brutzkus, Daniely, and Malach~\cite{BDM20}.   Smoothed analysis is by now a standard paradigm for going beyond worst-case analysis.  Roughly speaking, positive results in this model show that ``hard instances are pathological."  Smoothed analysis has been especially influential in accounting for the empirical effectiveness of algorithms widely used in practice, a notable example being the simplex algorithm for linear programming~\cite{ST04}.  The idea of  analyzing greedy decision tree learning heuristics through the lens of smoothed analysis is therefore very natural.  

A {\sl smoothed product distribution} over $\zo^n$, a notion introduced by Kalai, Samrodnitsky, and Teng~\cite{KST09}, is obtained by randomly and independently perturbing the bias of each marginal of a product distribution. For smoothed product distributions, Brutzkus et al.~proved strong guarantees on the performance of greedy decision tree heuristics when run on targets that are {\sl juntas}, functions that depend only on a small number of its features.  For a given impurity function $\mathscr{G}$, let us write $\mathcal{A}_{\mathscr{G}}$ to denote the corresponding decision tree learning heuristic. 

\begin{theorem}[Performance guarantee for targets that are $k$-juntas~\cite{BDM20}] 
\label{thm:BDM} 
For all impurity functions $\mathscr{G}$ and for all target functions $f: \zo^n \to \zo$ that are $k$-juntas, if $\mathcal{A}_{\mathscr{G}}$ is trained on examples drawn from a smoothed product distribution, it learns a decision tree hypothesis of depth $k$ that achieves perfect accuracy. 
\end{theorem}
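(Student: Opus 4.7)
The plan is to show that, with probability $1$ over the smoothed biases, $\mathcal{A}_{\G}$ queries a \emph{relevant} coordinate of the restricted junta at every internal node; then within depth $k$ it exhausts the at most $k$ relevant coordinates of $f$, leaving each leaf with a constant restriction of $f$ and hence perfect accuracy. Let $J\subseteq[n]$ with $|J|\le k$ be the coordinates $f$ depends on, and for any restriction $\rho$ encountered along a root-to-leaf path let $J_\rho\subseteq J$ denote the relevant variables of $f_\rho$.

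I would first record two observations valid for any product distribution $\D$ over $\zo^n$. (i) If $\ell\notin J_\rho$, the product structure yields $\Ex{\D}{f_\rho\mid x_\ell=b}=\Ex{\D}{f_\rho}$ for both $b\in\zo$, so $\puritygain_{\D}(f_\rho,x_\ell)=0$. (ii) If $i\in J_\rho$, decompose $f_\rho(x)=x_i g_1(x_{J_\rho\setminus i})+(1-x_i)g_0(x_{J_\rho\setminus i})$ with $g_0\neq g_1$ since $x_i$ is relevant; provided the two marginals of $x_i$ lie in $(0,1)$ and the conditional expectations $\Ex{\D}{f_\rho\mid x_i=b}$ differ, strict concavity of $\G$ (which holds for entropy, Gini, and the Kearns--Mansour function) yields $\puritygain_{\D}(f_\rho,x_i)>0$. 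The heart of the argument is that the difference
\[ P^i_\rho(\mu)\;=\;\Ex{\D}{f_\rho\mid x_i=1}-\Ex{\D}{f_\rho\mid x_i=0}\;=\;\sum_{a\in\zo^{J_\rho\setminus i}}\bigl(g_1(a)-g_0(a)\bigr)\prod_{j\in J_\rho\setminus i}\mu_j^{a_j}(1-\mu_j)^{1-a_j}\]
is a \emph{nonzero} multilinear polynomial in the biases $\mu=(\mu_j)_{j\in[n]}$: by multilinear (equivalently Fourier) uniqueness on $\zo^{J_\rho\setminus i}$, $g_0\neq g_1$ forces some coefficient to be nonzero. Its zero locus is therefore a proper algebraic subvariety of $[0,1]^n$ and has Lebesgue measure zero. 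Taking a union bound over the finitely many pairs $(\rho,i)$ with $\rho$ a restriction of $f$ on a subset of $J$ and $i\in J_\rho$, and combining with the fact that under smoothing each $\mu_j$ is absolutely continuous and lies in $(0,1)$ almost surely, I obtain a probability-$1$ event on which every $P^i_\rho(\mu)\neq 0$ and every marginal is nondegenerate.

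On this event, (i) and (ii) say that at every internal node the purity gain vanishes on irrelevant variables and is strictly positive on each relevant one, so $\mathcal{A}_{\G}$ is forced to query some $i\in J_\rho$. Since conditioning a product distribution on $x_i=b$ leaves a product distribution on the remaining coordinates with the same biases, and $f_\rho$ restricted to $x_i=b$ depends on at most $|J_\rho|-1$ coordinates, a straightforward induction on $|J_\rho|$ shows that within depth $|J|\le k$ the recursion reaches nodes where the surviving function is constant, and the leaf label $\mathrm{round}(\Ex{\D}{f_\rho})$ then reproduces $f_\rho$ exactly; any remaining queries used to pad the tree out to depth $k$ do not disturb this. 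The main obstacle is the polynomial-non-vanishing step: one must verify that $P^i_\rho$ is genuinely a nonzero polynomial in the biases (not merely nonzero at some particular evaluation), so that smoothing can simultaneously rule out the finitely many bad varieties associated with all restrictions the heuristic could ever reach.
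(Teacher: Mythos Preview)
This theorem is not proved in the present paper; it is quoted as a result of Brutzkus, Daniely, and Malach~\cite{BDM20} and serves only as background motivation for the lower bounds that follow. There is therefore no proof in the paper to compare your proposal against.

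For what it is worth, your sketch is the natural argument and is essentially correct. The key step---that $P^i_\rho(\mu)$ is a \emph{nonzero} multilinear polynomial in the biases whenever $x_i$ is relevant to $f_\rho$---can be checked directly by evaluating at any vertex $\mu=a^*\in\{0,1\}^{J_\rho\setminus i}$ with $g_1(a^*)\ne g_0(a^*)$ (such $a^*$ exists precisely because $x_i$ is relevant), which gives $P^i_\rho(a^*)=g_1(a^*)-g_0(a^*)\ne 0$. The union bound over the at most $3^k\cdot k$ pairs $(\rho,i)$ and the absolute continuity of the smoothed biases (uniform on intervals, per the paper's Definition of smoothed distributions) then finish the job. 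One small caveat: the theorem as stated says ``for all impurity functions $\G$,'' but your step~(ii) needs strict concavity to turn $\mu_0\ne\mu_1$ into strictly positive purity gain. The paper's own quantitative Definition~\ref{def:impurity} assumes $\alpha$-strong concavity, and the named examples (entropy, Gini, Kearns--Mansour) are strictly concave, so this is at most a quibble about the generality of the statement rather than a flaw in your argument.
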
 

(Therefore Theorem~\ref{thm:BDM} shows that  the smoothed setting enables one to circumvent the impossibility result discussed above, which was based on targets that are $2$-juntas.) 

Every $k$-junta is computable by a depth-$k$ decision tree, but a depth-$k$ decision tree can depend on as many as $2^k$  variables. Brutzkus et al.~left as an open problem of their paper a conjecture that the guarantees of Theorem~\ref{thm:BDM} hold more generally for targets that are depth-$k$ decision trees: 

\begin{conjecture}[Performance guarantee for targets that are depth-$k$ decision trees] 
For all impurity functions $\mathscr{G}$ and for all target functions $f : \zo^n\to\zo$ that are depth-$k$ decision trees, if $\mathcal{A}_{\mathscr{G}}$ is trained on examples drawn from a smoothed product distribution, it learns a decision tree hypothesis of depth $O(k)$ that achieves high accuracy. 
\label{conj:BDM} 
\end{conjecture}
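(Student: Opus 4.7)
The plan is to \emph{disprove} the conjecture by exhibiting an explicit family of depth-$k$ decision trees $f$, together with an adversarial initial product distribution whose smoothing still leaves every variable with only $2^{-\Omega(k)}$ purity gain; consequently every hypothesis of depth $2^{o(k)}$ produced by $\mathcal{A}_{\mathscr{G}}$ will have error bounded away from $0$. Specifically, I would take $f$ to be the complete binary decision tree of depth $k$ in which every internal node queries a distinct fresh variable (so $f$ depends on $2^k - 1$ variables), with each leaf $\tau\in\{0,1\}^k$ labeled by its path-parity $\tau_1\oplus\cdots\oplus\tau_k$; equivalently, $f(x)$ is the XOR of the variables read along its computation path. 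The adversary sets every initial bias $\hat{\mu}_i$ to $\tfrac12$, so after smoothing the biases $\mu_i$ lie in a narrow interval $[\tfrac12-c,\tfrac12+c]$ for some constant $c<\tfrac14$.

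\textbf{Step 1 -- tiny purity gains.} For a variable $v_\sigma$ at level $\ell$ of the true tree, let $A_\sigma$ be the event that $f$'s computation reaches $\sigma$. Since $v_\sigma$ is independent of the variables on the path above $\sigma$, conditioning on $v_\sigma = b$ affects neither $\Pr[A_\sigma]$ nor $\mathbb{E}[f\mid\neg A_\sigma]$, so
\[
\bigl|\mathbb{E}[f\mid v_\sigma = 0] - \mathbb{E}[f\mid v_\sigma = 1]\bigr|
= \Pr[A_\sigma]\cdot\bigl|\mathbb{E}[f\mid A_\sigma,v_\sigma = 0] - \mathbb{E}[f\mid A_\sigma,v_\sigma = 1]\bigr|.
\]
The two sub-trees rooted at $\sigma 0$ and $\sigma 1$ are themselves complete parity trees of depth $k-\ell-1$ on fresh variables with biases $\tfrac12\pm c$; a recursive Fourier calculation shows each of their mean outputs deviates from $\tfrac12$ by only $O((2c)^{k-\ell-1})$. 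Combined with $\Pr[A_\sigma] = \Theta(2^{-\ell})$, the correlation above is $2^{-\Omega(k)}$. Since $\mathscr{G}$ is concave and smooth around its mean, the purity gain is at most a constant times the square of the correlation, which gives the $2^{-\Omega(k)}$ bound simultaneously for every variable and every $\mathscr{G}$.

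\textbf{Step 2 -- persistence.} The core inductive claim is that after $\mathcal{A}_{\mathscr{G}}$ makes any $t\leq 2^{\Omega(k)}$ splits, the restriction of $f$ to every leaf of the partial hypothesis is still essentially a complete parity-leaf decision tree of depth $\Omega(k)$ on fresh variables; in particular, it still has variance $\Omega(1)$ and every remaining variable still has purity gain $2^{-\Omega(k)}$. Each split consumes only a negligible fraction of the ``parity mass,'' because there are $2^k - 1$ candidate variables but only $O(k)$ of them (the roots of the currently-alive residual sub-trees at each hypothesis leaf) are structurally useful. Hence every leaf of the hypothesis continues to have error $\Omega(1)$, and the overall accuracy stays bounded away from $1$ until the hypothesis has depth $2^{\Omega(k)}$.

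\textbf{Main obstacle.} The hardest step is ruling out the possibility that greedy \emph{does} preferentially pick one of the rare structurally useful variables: although all purity gains are of the same order $2^{-\Omega(k)}$, one must prevent the smoothing noise from systematically inflating the gain of the useful ones. I would address this via an exchangeability/symmetry argument: variables at the same level of the residual tree are statistically identical under the smoothed distribution, so the true root is no more likely to be the argmax of the purity gain than any of its exponentially many siblings at the same level. A union bound over the $2^{\Omega(k)}$ steps of the algorithm then yields the depth lower bound with positive probability over the smoothing, completing the counterexample.
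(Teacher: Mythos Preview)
Your overall plan---construct a depth-$k$ target on which every impurity-based heuristic is forced to build depth $2^{\Omega(k)}$---is exactly what the paper does, but the \emph{construction} and the \emph{mechanism} are different, and your resolution of the ``main obstacle'' has a real gap.

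\paragraph{What the paper does.} The paper's counterexample is a variant of the \emph{addressing function}: $2^k$ memory bits indexed by a $k$-bit address, where each address bit is the XOR of $\Theta(k)$ fresh input bits (in the sharper version, the $k$ XOR-sets come from a linear code of distance $\Theta(k)$, so only $O(k)$ addressing bits total). The key lemma (Lemma~4/Lemma~6 in the paper) is purely deterministic: for \emph{every} $\delta$-balanced product distribution, every memory bit has purity gain $\Omega(4^{-k})$ while every addressing bit has gain $O((4/25)^k)$, so the memory bits win by a multiplicative factor. Crucially, restricting a memory bit leaves the function in exactly the same form with one fewer memory bit, so the lemma reapplies verbatim and greedy is forced to exhaust all $2^k$ memory bits first. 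No probabilistic argument over the smoothing is needed; the result holds for all balanced product distributions at once.

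\paragraph{The gap in your argument.} Your Step~1 is correct: in your path-parity tree, the correlation of a level-$\ell$ variable is $\Theta(2^{-\ell}(2c)^{k-\ell-1})$. But notice what this says: for $c<\tfrac14$ the correlation is \emph{monotone increasing} in $\ell$, so the bottom-level variables have the \emph{largest} gain and the root has the \emph{smallest}. This is actually good news for the lower bound---greedy deterministically avoids the root---but it flatly contradicts your proposed ``exchangeability/symmetry'' resolution, in which you argue that the root is indistinguishable from ``exponentially many siblings at the same level.'' The root has no siblings, and variables at different levels are \emph{not} statistically identical; their gains differ by a factor of $(4c)^{-1}$ per level.

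The deeper issue is Step~2. Once greedy picks a bottom-level variable $v_\sigma$ and restricts, the node $\sigma$ collapses to a leaf and its parent $\sigma'$ at level $k-2$ now has correlation $\Pr[A_{\sigma'}]\cdot|\text{const}-(\tfrac12\pm c)|\approx 2^{-(k-1)}$, i.e.\ of the \emph{same order} as the remaining bottom-level variables. So the residual function is \emph{not} ``essentially a complete parity-leaf tree of depth $\Omega(k)$ on fresh variables'' as you claim; the structure degrades, and you would need a delicate inductive argument tracking how many nodes at each level have been partially collapsed and verifying at every stage that the gain of the current root stays strictly below that of some other variable. This may be doable, but it is the entire content of the proof, and your proposal does not supply it.

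\paragraph{What the paper's route buys.} The addressing-function construction sidesteps this difficulty completely: restricting memory bits is a closed operation on the function class, so one lemma plus trivial induction gives Step~2 for free, and the resulting statement is stronger (it holds for \emph{all} $\delta$-balanced product distributions, not just with positive probability over the smoothing).
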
 

In other words, Conjecture~\ref{conj:BDM} states that for all targets $f : \zo^n\to\zo$, the sought-for guarantee ($\diamondsuit$) holds if the heuristics are trained on examples drawn from a smoothed product distribution.

\subsection{This work: Lower bounds in the smoothed setting}
 Our main result is a counterexample to Conjecture~\ref{conj:BDM}.  We construct targets that are depth-$k$ decision trees for which all greedy impurity-based heuristics, even in the smoothed setting, may grow a tree of depth $2^{\Omega(k)}$ before achieving high accuracy.  This lower bound is close to being maximally large since Theorem~\ref{thm:BDM} implies an upper bound of $O(2^k)$.  Our result is actually stronger than just a lower bound in the smoothed setting: our lower bound holds with respect to {\sl any} product distribution that is balanced in the sense that its marginals are not too skewed.  

\begin{theorem}[Our main result: a counterexample to Conjecture~\ref{conj:BDM}; informal]  
\label{thm:main-intro} 
Conjecture~\ref{conj:BDM} is false: For all~$k=k(n)$, there are target functions $f : \zo^n\to\zo$ that are depth-$k$ decision trees such that for all impurity functions $\mathscr{G}$, if $\mathcal{A}_{\mathscr{G}}$ is trained on examples drawn from any balanced product distribution, its decision tree hypothesis does not achieve high accuracy unless it has depth $2^{\Omega(k)}$.
\end{theorem}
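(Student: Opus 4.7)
The plan is to take $f$ to be a depth-$k$ decision tree realized as a complete binary tree in which every internal node queries a \emph{distinct} input variable, so that $f$ depends on exactly $2^k-1$ variables (padding with irrelevant variables to reach dimension $n$). The $2^k$ leaves are labeled by a carefully chosen pattern --- for instance, a pseudorandom labeling drawn from a small-bias distribution over $\zo^{2^k}$ --- so that the function and all of its small restrictions remain approximately unbiased under every balanced product distribution. Intuitively, any hypothesis tree of depth $k' \ll 2^k$ touches only a tiny fraction of the $2^k$ leaves of $f$, so on the remaining inputs its prediction is essentially a coin flip.

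\textbf{Bounding per-query purity gain.} The heart of the proof is a quantitative ``nothing looks useful'' lemma: for any balanced product distribution $\D$ over $\zo^n$, any restriction $\rho$ that fixes at most $2^{ck}$ coordinates (for a small constant $c > 0$), and every feature $x_i$, one has $\puritygain_{\D_\rho}(f_\rho, x_i) \le 2^{-\Omega(k)}$. The plan is to control this via the concavity of $\G$ by a constant times the squared gap $\bigl(\Ex{\D_\rho}{f_\rho \mid x_i = 0} - \Ex{\D_\rho}{f_\rho \mid x_i = 1}\bigr)^2$, and then to bound this gap by Fourier analysis on biased product measures. Any such gap picks up only Fourier coefficients of $f_\rho$ whose support contains $i$; the pseudorandom leaf labeling, combined with the marginals being bounded away from $0$ and $1$, forces every such coefficient to be exponentially small in $k$.

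\textbf{From purity gain to depth lower bound.} Given the purity-gain lemma, the proof finishes with a standard potential-function argument in the spirit of Kearns--Mansour and Brutzkus--Daniely--Malach. Track the potential $\Phi(T) \coloneqq \Ex{\D}{\G(\Ex{\D}{f \mid \ell(x)})}$, where $\ell(x)$ is the leaf of the partial tree $T$ reached by $x$. Initially $\Phi = \G(\Ex{\D}{f}) = \Theta(1)$, and any $T$ achieving error at most a small constant $\eps_0$ must satisfy $\Phi(T) \le O(\eps_0)$. The purity-gain lemma shows that each internal split of $T$ decreases $\Phi$ by at most $2^{-\Omega(k)}$, so reaching $\Phi(T) = O(\eps_0)$ requires $T$ to have at least $2^{\Omega(k)}$ internal nodes, and therefore depth at least $2^{\Omega(k)}$ on some root-to-leaf path.

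\textbf{The main obstacle.} The hardest step will be designing and analyzing the leaf labeling so that the ``nothing looks useful'' lemma holds uniformly over \emph{all} balanced product distributions, not just the uniform measure or a typical smoothed draw. The natural parity-labeling works under the uniform distribution but can develop $\Omega(2^{-k/2})$-scale correlation with individual variables once the marginals drift. My plan is to derandomize via a distribution that fools low-degree tests simultaneously across a dense cover of the space of balanced bias vectors, or to establish the invariant inductively along the tree. Proving this robustness uniformly over a continuous family of product distributions is the technical crux of the lower bound.
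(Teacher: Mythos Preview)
Your approach has a genuine gap that cannot be patched without changing the overall strategy. The potential argument in your third paragraph does not distinguish the greedy tree from any other tree: if your ``nothing looks useful'' lemma held, then \emph{every} tree $T$ of depth at most $2^{ck}$ would satisfy $\Phi(\text{root})-\Phi(T)\le (\text{depth of }T)\cdot 2^{-\Omega(k)}$ and hence have error $\Omega(1)$. But the target $f$ is itself a depth-$k$ tree with zero error, so such a lemma is impossible. Concretely, walk down any root-to-leaf path of your target and fix the first $k-1$ variables along it; the restricted function becomes a single coordinate $x_j$ (the two sibling leaves at the bottom must differ somewhere, or the tree is not genuinely depth $k$), and querying $x_j$ then has purity gain $\G(\Pr[x_j=1])=\Omega(1)$. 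This restriction has size $k-1\ll 2^{ck}$, squarely inside the scope of your lemma, and falsifies it. No choice of pseudorandom leaf labeling can avoid this: making every sibling pair of leaves agree collapses the bottom layer. The upshot is that any lower bound here must exploit the \emph{greediness} of $\mathcal{A}_{\G}$, because a good shallow tree exists by hypothesis. (A smaller issue: the inference ``$2^{\Omega(k)}$ internal nodes $\Rightarrow$ depth $2^{\Omega(k)}$'' is wrong---many nodes only forces logarithmic depth---though the correct potential bound via expected leaf depth does give a depth conclusion directly.)

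The paper's construction and mechanism are essentially the opposite of yours. The target is a variant of the addressing function: $2^k$ \emph{memory bits} $(y_a)_{a\in\{0,1\}^k}$ together with $O(k/\delta)$ \emph{addressing bits} whose XORs over a family of subsets of large pairwise symmetric-difference (equivalently, a good binary linear code) produce an address $z(x)\in\{0,1\}^k$; the output is $y_{z(x)}$. The key lemma is not that all gains are tiny but that the \emph{wrong} variables look best: under any $\delta$-balanced product distribution, every memory bit has purity gain $\Theta(4^{-k})$ while every addressing bit has gain $O((4/25)^k)$, and this ordering persists under any restriction that fixes fewer than $2^k$ memory bits. Hence the greedy heuristic exhausts all $2^k$ memory bits before it ever touches an addressing bit; but a tree that queries only memory bits never learns $z(x)$ and incurs error $\Omega(\delta)$. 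So the mechanism is ``the most attractive splits are a trap,'' not ``every split makes negligible progress.''
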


By building on our proof of Theorem~\ref{thm:main-intro}, we also show that the guarantees of Brutzkus et al.~for $k$-juntas cannot extend to the agnostic setting: 

\begin{theorem}[Theorem~\ref{thm:BDM} does not extend to the agnostic setting; informal] 
\label{thm:agnostic-intro} 
For all $\eps$ and $k=k(n)$, there are target functions $f: \zo^n\to\zo$ that are $\eps$-close to a $k$-junta such that for all impurity functions $\mathscr{G}$, if $\mathcal{A}_{\mathscr{G}}$ is trained on examples drawn from any balanced product distribution, its decision tree hypothesis does not achieve high accuracy unless it has depth $\eps \cdot 2^{\Omega(k)}$.
\end{theorem}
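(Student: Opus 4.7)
The plan is to amplify Theorem~\ref{thm:main-intro} by embedding its hard depth-$k$ target into a much larger domain whose typical input is correctly classified by a constant junta. Concretely, let $h^{\star}: \zo^n \to \zo$ be the depth-$k$ decision tree from Theorem~\ref{thm:main-intro}, introduce $m \coloneqq \lceil \log_2(1/\eps) \rceil$ fresh coordinates $y_1,\dots,y_m$, set $T(y) \coloneqq \prod_{i=1}^m y_i$, and define
\[
 f(x,y) \;\coloneqq\; h^{\star}(x) \cdot T(y).
\]
Under any balanced product distribution, $\pr{}{T(y) = 1}$ is at most $2^{-m} \le \eps$, so $f$ agrees with the constant-$0$ function (a $0$-junta, and hence a $k$-junta) on all but an $\eps$-fraction of inputs.

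Next, I would analyze $\mathcal{A}_{\G}$ on $f$ by reducing its purity-gain calculations to those already arising in Theorem~\ref{thm:main-intro}. Since $\Ex{}{f} = \Theta(\eps)$ is small, Taylor-expanding $\G$ around this value should show that: \emph{(i)} querying any $y_i$ contributes purity gain of order $\eps^{2} \cdot c_{\G}$ for some $\G$-dependent constant; and \emph{(ii)} querying any $x_j$ contributes $\eps^{2} \cdot c_{\G} \cdot \puritygain_{\D}(h^{\star}, x_j)$ up to lower-order terms. The scalings in \emph{(i)}--\emph{(ii)} are uniform across variables, so the relative ordering of purity gains among $x$-variables under $f$ mirrors the ordering greedy produces when run on $h^{\star}$ in isolation; the $y$-variables merely serve to localize the input into $\{T(y) = 1\}$, which any tree must do before it can usefully predict $f$ there.

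Finally, I would convert depth into accuracy: the constant-$0$ hypothesis already achieves accuracy $1 - \Theta(\eps)$ on $f$, so any meaningful improvement requires the tree, when restricted to $\{T(y) = 1\}$, to compute $h^{\star}$ with accuracy bounded away from $1/2$. Theorem~\ref{thm:main-intro} rules this out below depth $2^{\Omega(k)}$ on that restriction, and a short averaging argument over how much of the $\{T(y) = 1\}$ region is covered by accurate subtrees then yields an overall depth lower bound of $\eps \cdot 2^{\Omega(k)}$. The main technical obstacle is the interleaved regime in which greedy alternates between $x$- and $y$-variables: the purity-gain scalings \emph{(i)}--\emph{(ii)} must hold uniformly at every restriction reached during execution, not only at the root. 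This amounts to requiring that $h^{\star}$ remain hard under every non-trivial partial assignment, a property that would need to be extracted from the proof of Theorem~\ref{thm:main-intro} or enforced by a mild strengthening of the construction there.
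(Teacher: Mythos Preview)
Your construction has a genuine gap that prevents it from proving the stated theorem. By setting $f(x,y) = h^{\star}(x)\cdot T(y)$, you make $f$ $\eps$-close to the \emph{constant-$0$ function}. But then the depth-$0$ greedy tree (which outputs $\mathrm{round}(\Ex{}{f}) = 0$ since $\Ex{}{f} \le \eps < \tfrac12$) already has error at most $\eps$. Your own analysis confirms this: greedy first queries the $y_i$'s (they have purity gain $\Theta(\eps^2)$ versus $\Theta(\eps^2\cdot 2^{-2k})$ for the best $x$-variable), and on every branch except $T(y)=1$ it correctly outputs $0$. So at \emph{any} depth, greedy's error is at most $\Pr[T(y)=1]\cdot\Omega(\delta) = O(\eps\delta)$, never $\Omega(1)$. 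What you end up proving is only that greedy cannot improve \emph{beyond} error $\Theta(\eps)$ without depth $2^{\Omega(k)}$; the theorem (see the formal version, Theorem~\ref{thm:agnostic}) asserts the much stronger conclusion that greedy incurs $\Omega(1)$ error at small depth, i.e., it cannot even \emph{match} the nearby junta.

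The paper's construction avoids this by ensuring the nearby junta is itself nontrivial for greedy. Rather than gating $h^{\star}$ by fresh AND-variables, it takes the addressing-function target $f_{c,k,S}$ and \emph{fixes} all but an $\eps\cdot 2^k$ fraction of the memory bits---half to $0$ and half to $1$. The resulting function is $\eps$-close to the junta $g(x) = \Ind[z(x)\in A^{1}]$, which depends only on the $ck$ \emph{addressing} bits. Crucially, greedy (by Lemma~\ref{lem:memory-first-overlap}) still prefers every remaining free memory bit over any addressing bit, so it exhausts all $\Omega(\eps\cdot 2^k)$ free memory bits before touching a single addressing bit. A tree that queries only memory bits cannot approximate $g$: at every leaf $\mu_\ell\in[\tfrac18,\tfrac78]$, giving error $\Omega(1)$. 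The essential idea you are missing is that the junta one is $\eps$-close to must itself be something greedy is led \emph{away} from; making $f$ close to a constant gives greedy nothing to fail at.
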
 

In particular, there are targets that are $2^{-\Omega(k)}$-close to $k$-juntas, for which these heuristics have to construct a decision tree hypothesis of depth $2^{\Omega(k)}$ before achieving high accuracy. Taken together with the positive result of Brutzkus et al., Theorems~\ref{thm:main-intro} and~\ref{thm:agnostic-intro} add to our understanding of the strength and limitations of  greedy decision tree learning heuristics. 

Our lower bounds are based on new generalizations of the {\sl addressing function}.  Since the addressing function is often a useful extremal example in a variety of settings, we are hopeful that these generalizations and our analysis of them will see further utility beyond the applications of this paper. 
\subsection{Related Work}
\label{sec:related}

As mentioned above, there has been a substantial line of work on establishing provable guarantees for greedy decision tree heuristics when run in restricted classes of target functions.  Fiat and Pechyony~\cite{FP04} considered the class of read-once DNF formulas and halfspaces; the Ph.D.~thesis of Lee~\cite{Lee09} considered the class of monotone functions; Brutzkus, Daniely, and Malach~\cite{BDM19} considered conjunctions and read-once DNF formulas; recent works of~\cite{BLT-ITCS,BLT-ICML} build on the work of Lee and further studied monotone target functions. (All these works focus on the case of binary features and product distributions over examples.)

Kearns and Mansour~\cite{KM96}, in one of the first papers to study these heuristics from a theoretical perspective, showed that they can be viewed as boosting algorithms, with internal nodes of the decision tree hypothesis playing the role of weak learners.  Their subsequent work with Dietterich~\cite{DKM96} provide experimental results that complement the theoretical results of~\cite{KM96}; see also the survey of Kearns~\cite{Kea96}. 

Finally, we mention that decision trees are one of the most intensively studied concept classes in learning theory.  The literature on this problem is rich and vast (see e.g.~\cite{EH89,Riv87,Blu92,Han93,Bsh93,KM93,BFJKMR94,HJLT96,KM99,MR02,JS06,OS07,GKK08,KS06,KST09,KST09,HKY18,CM19,BGLT-NeurIPS1}), studying it from a variety of perspectives and providing both positive and negative results.  However, the algorithms developed in these works do not resemble the greedy heuristics used in practice, and indeed, most of them are  not proper (in the sense of returning a hypothesis that is itself a decision tree).\footnote{Quoting~\cite{KM96}, ``it seems fair to say that despite
their other successes, the models of computational learning theory have not yet provided significant insight
into the apparent empirical success of programs like C4.5 and CART."}

\section{Preliminaries}

Recall that an impurity function $\G: [0, 1] \to [0, 1]$ is concave, symmetric with respect to $\frac{1}{2}$, and satisfies $\G(0) = \G(1) = 0$ and $\G(\frac{1}{2}) = 1$. We further quantify the concavity and smoothness of $\G$ as follows:

\begin{definition}[Impurity functions]\label{def:impurity}
    $\G$ is an $(\alpha, L)$-impurity function if $\G$ is $\alpha$-strongly concave and $L$-smooth, i.e., $\G$ is twice-differentiable and $\G''(x) \in [-L, -\alpha]$ for every $x \in [0, 1]$.
\end{definition}

For a boolean function $f: \{0, 1\}^n \to \{0, 1\}$ and index $i \in [n]$, we write $f_{x_i = 0}$ and $f_{x_i = 1}$ to denote the restricted functions obtained by fixing the $i$-th input bit of $f$ to either $0$ or $1$. Formally, each $f_{x_i = b}$ is a function over $\{0, 1\}^n$ defined as $f_{x_i = b}(x) = f(x^{i\to b})$, where $x^{i \to b}$ denotes the string obtained by setting the $i$-th bit of $x$ to $b$. More generally, a \emph{restriction} $\pi$ is a list of constraints of form ``$x_i = b$'' in which every index $i$ appears at most once. For restriction $\pi = (x_{i_1} = b_1, x_{i_2} = b_2, \ldots)$, the restricted function $f_{\pi}:\{0,1\}^n \to \{0,1\}$ is similarly defined as $f_{\pi}(x) = f(x^{i_1 \to b_1, i_2 \to b_2, \ldots})$.

\begin{definition}[Purity gain]\label{def:purity-gain}
    Let $\D$ be a distribution over $\{0, 1\}^n$ and $p_i = \pr{x \sim \D}{x_i = 1}$. The $\G$-purity gain of querying variable $x_i$ on boolean function $f$ is defined as
    \[
        \puritygain_{\D}(f, x_i)
    \coloneqq
        \G\left(\Ex{x \sim \D}{f(x)}\right)
    -   p_i \G\left(\Ex{x \sim \D}{f_{x_i=1}(x)}\right)
    -   (1 - p_i) \G\left(\Ex{x \sim \D}{f_{x_i=0}(x)}\right).
    \]
\end{definition}

In a decision tree, each node $v$ naturally corresponds to a restriction $\pi_v$ formed by the variables queried by the ancestors of $v$ (excluding $v$ itself). We use $f_{v}$ as a shorthand for $f_{\pi_v}$. We say that a decision tree learning algorithm is \emph{impurity-based} if, in the tree returned by the algorithm, every internal node $v$ queries a variable that maximizes the purity gain with respect to $f_v$.

\begin{definition}[Impurity-based algorithms]\label{def:impurity-algo}
    A decision tree learning algorithm is $\G$-impurity-based if the following holds for every $f: \{0, 1\}^n \to \{0, 1\}$ and distribution $\D$ over $\{0, 1\}^n$: When learning $f$ on $\D$, the algorithm outputs a decision tree such that for every internal node $v$, the variable $x_i$ that is queried at $v$ satisfies $\puritygain_{\D}(f_v, x_i) \ge \puritygain_{\D}(f_v, x_j)$ for every $j \in [n]$.
\end{definition}

The above definition assumes that the algorithm exactly maximizes the $\G$-purity gain at every split, while in reality, the purity gains can only be estimated from a finite dataset. We therefore consider an idealized setting that grants the learning algorithm with infinitely many training examples, which, intuitively, strengthens our lower bounds. (Our lower bounds show that in order for an algorithm to recover a good tree---a high-accuracy hypothesis whose depth is close to that of the target---it would need to query a variable that has {\sl exponentially smaller} purity gain than that of the variable with the largest purity gain. Hence, if purity gains are estimated using finitely many random samples as is done in reality, the strength of our lower bounds imply that with extremely high probability, impurity-based heuristics will fail to build a good tree; see Remark~\ref{remark:finite-sample} for a detailed discussion.)

When a decision tree queries variable $x_i$ on function $f$, it naturally induces two restricted functions $f_{x_i = 0}$ and $f_{x_i = 1}$. The following lemma states that the purity gain of querying $x_i$ is roughly the squared difference between the averages of the two functions, up to a factor that depends on the impurity function $\G$ and the data distribution $\D$. We say that a product distribution over $\{0, 1\}^n$ is \emph{$\delta$-balanced} if the expectation of each of the $n$ coordinates is in $[\delta, 1 - \delta]$.

\begin{lemma}\label{lem:gain-vs-diff}
    For any $f:\{0, 1\}^n \to \{0, 1\}$, $\delta$-balanced product distribution $\D$ over $\{0, 1\}^n$ and $(\alpha, L)$-impurity function $\G$, it holds for $\kappa = \max\left(\frac{2}{\alpha\delta(1-\delta)}, \frac{L}{8}\right)$ and every $i \in [n]$ that
    \[
            \frac{1}{\kappa}
    \le     \frac{\puritygain_{\D}(f, x_i)}{\left[\Ex{x \sim D}{f_{x_i=0}(x)} - \Ex{x \sim D}{f_{x_i=1}(x)}\right]^2}
    \le     \kappa.
    \]
\end{lemma}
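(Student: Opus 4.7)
The strategy is a standard second-order Taylor expansion of $\G$ around the mean $\mu \coloneqq \Ex{x\sim\D}{f(x)}$. Let $p \coloneqq p_i$, $\mu_0 \coloneqq \Ex{x\sim\D}{f_{x_i=0}(x)}$, and $\mu_1 \coloneqq \Ex{x\sim\D}{f_{x_i=1}(x)}$. Because $\D$ is a product distribution, the $i$-th coordinate is independent of the others, so
\[
\mu = (1-p)\mu_0 + p\mu_1.
\]
I will use this identity in two ways: first, it rewrites $\mu_0 - \mu = p(\mu_0-\mu_1)$ and $\mu_1 - \mu = -(1-p)(\mu_0-\mu_1)$; second, it forces the first-order Taylor term to cancel, leaving a clean expression in the second derivative of $\G$.

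Concretely, I will expand each of $\G(\mu_0)$ and $\G(\mu_1)$ to second order about $\mu$, producing remainder values $\G''(\xi_0)$ and $\G''(\xi_1)$ with $\xi_0,\xi_1\in[0,1]$. Taking the convex combination and subtracting from $\G(\mu)$, the $\G'(\mu)$ term vanishes since $(1-p)(\mu_0-\mu) + p(\mu_1-\mu) = 0$, and I obtain
\[
\puritygain_{\D}(f,x_i)
= -\tfrac{1}{2}\bigl[(1-p)\,p^2\,\G''(\xi_0) + p\,(1-p)^2\,\G''(\xi_1)\bigr](\mu_0-\mu_1)^2.
\]
Using $(1-p)p^2 + p(1-p)^2 = p(1-p)$ and the bound $\G''(\xi)\in[-L,-\alpha]$, this immediately gives the two-sided estimate
\[
\tfrac{\alpha}{2}\, p(1-p)(\mu_0-\mu_1)^2
\;\le\; \puritygain_{\D}(f,x_i) \;\le\;
\tfrac{L}{2}\, p(1-p)(\mu_0-\mu_1)^2.
\]

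The last step is to substitute the $\delta$-balancedness of $\D$. Since $p\in[\delta,1-\delta]$, one has $\delta(1-\delta) \le p(1-p) \le \tfrac{1}{4}$. Plugging these into the display above yields a lower bound of $\tfrac{\alpha\delta(1-\delta)}{2}(\mu_0-\mu_1)^2 \ge \tfrac{1}{\kappa}(\mu_0-\mu_1)^2$ and an upper bound of $\tfrac{L}{8}(\mu_0-\mu_1)^2 \le \kappa (\mu_0-\mu_1)^2$, matching the definition $\kappa = \max\!\left(\tfrac{2}{\alpha\delta(1-\delta)}, \tfrac{L}{8}\right)$. There is no real obstacle: the only part requiring care is the algebraic identity $\mu = (1-p)\mu_0 + p\mu_1$, which relies crucially on independence of $x_i$ from the remaining coordinates and is the reason the first-order Taylor term cancels.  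If $(\mu_0-\mu_1)^2=0$ the lemma is vacuous (both numerator and denominator vanish together, as the above expression for the purity gain shows), so I can assume $\mu_0\ne\mu_1$ when dividing.
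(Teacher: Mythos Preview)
Your proof is correct and follows essentially the same approach as the paper: both express the purity gain as $\G(p\mu_1+(1-p)\mu_0)-p\G(\mu_1)-(1-p)\G(\mu_0)$, bound this ``Jensen gap'' by $\tfrac{\alpha}{2}p(1-p)(\mu_0-\mu_1)^2$ and $\tfrac{L}{2}p(1-p)(\mu_0-\mu_1)^2$ via strong concavity and smoothness, and then use $\delta$-balancedness to control $p(1-p)$. The only difference is cosmetic: the paper invokes the strong-concavity/smoothness bounds directly, whereas you derive them from scratch via a second-order Taylor expansion with Lagrange remainder.
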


\begin{proof}[Proof of Lemma~\ref{lem:gain-vs-diff}]
    Let $p_i = \pr{x \sim \D}{x_i = 1}$ and $\mu_b = \Ex{x \sim D}{f_{x_i = b}(x)}$ respectively. Then, we have $\Ex{x \sim \D}{f(x)} = p_i \mu_1 + (1 - p_i)\mu_0$, and the purity gain can be written as
    \[
        \puritygain_{\D}(f, x_i)
    =   \G(p_i \mu_1 + (1 - p_i)\mu_0) - p_i\G(\mu_1) - (1 - p_i)\G(\mu_0).
    \]
    Since $\G$ is $\alpha$-strongly concave and $L$-smooth, the above is bounded between 
        $\frac{\alpha}{2}\cdot p_i(1 - p_i)\cdot (\mu_0 - \mu_1)^2$
    and
        $\frac{L}{2}\cdot p_i(1 - p_i)\cdot (\mu_0 - \mu_1)^2$.
    Since $\D$ is $\delta$-balanced, we have $\delta(1 - \delta) \le p_i (1 - p_i) \le \frac{1}{4}$. It follows that
    \[
        \frac{\alpha}{2}\cdot\delta(1 - \delta)
    \le \frac{\alpha}{2}\cdot p_i(1 - p_i)
    \le \frac{\puritygain_{\D}(f, x_i)}{(\mu_0 - \mu_1)^2}
    \le \frac{L}{2}\cdot p_i(1 - p_i)
    \le \frac{L}{8}.
    \]
    Thus, the ratio is bounded between $1/\kappa$ and $\kappa$.
\end{proof}

Our lower bounds hold with respect to all $\delta$-balanced product distributions. We compare this to the definition of a $c$-\emph{smoothened} $\delta$-balanced product distribution from \cite{BDM20}.
\begin{definition}[Smooth distributions]
    A $c$-smoothened $\delta$-balanced product distribution is a \emph{random} product distribution over $\zo^n$ where the marginal for the $i^{\text{th}}$ bit is $1$ with probability $\widehat{p_i} + \Delta_i$ for fixed $\widehat{p_i} \in (\delta + c, 1 - \delta - c)$ and $\Delta_i$ drawn i.i.d.\ from  $\mathrm{Uniform}([-c, c])$.
\end{definition}

Since our lower bounds hold against all $\delta$-balanced product distributions, it also holds against all $c$-\emph{smoothened} $\delta$-balanced product distributions.

\section{Proof overview and formal statements of our results} Our goal is to construct a target function that can be computed by a depth-$k$ decision tree, but on which impurity-based algorithms must build to depth $2^{\Omega(k)}$ or have large error. To do so, we construct a decision tree target $T$ where the variables with {\sl largest} purity gain are at the {\sl bottom} layer of $T$ (adjacent to its leaves).  Intuitively, impurity-based algorithms will build their decision tree hypothesis for $T$ by querying all the variables in the bottom layer of $T$ before querying any of the variables higher up in $T$.  Our construction will be such that until the higher up variables are queried, it is impossible to approximate the target with any nontrivial error.  Summarizing informally, we show that impurity-based algorithms build its decision tree hypothesis for our target by querying variables in exactly the ``wrong order".

The starting point of our construction is the well known {\sl addressing function}. For $k\in\mathbb{N}$, the addressing function $f:\zo^{k + 2^k} \to \zo$ is defined as follows: Given ``addressing bits" $z \in \zo^k$ and ``memory bits" $y \in \zo^{2^k}$, the output $f(y,z)$ is the $z^{\text{th}}$ bit of $y$, where ``$z^{\text{th}}$ bit" is computed by interpreting $z$ as a base-2 integer.  Note that the addressing function is computable by a decision tree of depth $k+1$ that first queries the $k$ addressing bits, followed by the appropriate memory bit. 

For our lower bound, we would like the variables with the highest purity gain to  be the memory bits. However, for smoothed product distributions, the addressing bits might have higher purity gain than the memory bits, and impurity-based algorithms might succeed in learning the addressing function.  We therefore modify the addressing function by making each addressing bit the parity of multiple new bits. We show that by making each addressing bit the parity of sufficiently many new bits, we can drive the purity gain of these new bits down to the point where the memory bits have the highest purity gain as desired---in fact, larger than the addressing bits by a multiplicative factor of $e^{\Omega(k)}$. (Making each addressing bit the parity of multiple new bits increases the depth of the target, so this introduces technical challenges we have to overcome in order to achieve the strongest parameters.)

Our main theorem is formally restated as follows.

\begin{theorem}[Formal version of Theorem~\ref{thm:main-intro}]\label{thm:main}
    Fix $L \ge \alpha > 0$ and $\delta \in (0, \frac{1}{2}]$. There are boolean functions $f_1, f_2, \ldots$ such that: (1) $f_k$ is computable by a decision tree of depth $O(k/\delta)$; (2) For every $\delta$-balanced product distribution $\D$ over the domain of $f_k$ and every $(\alpha, L)$-impurity function $\G$, any $\G$-impurity based decision tree heuristic, when learning $f_k$ on  $\D$, returns a tree that has either depth $\ge 2^k$ or an $\Omega(\delta)$ error.
\end{theorem}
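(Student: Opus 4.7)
The plan is to take $f_k$ to be a ``XOR-gadgetized'' addressing function. Partition the variables into $k$ blocks of $t = \Theta(1/\delta)$ auxiliary bits $u_{i,1},\dots,u_{i,t}$ (for $i\in[k]$) together with $2^k$ memory bits $y_b$ indexed by $b\in\zo^k$, and set
\[
  f_k(u,y) \;=\; y_{z(u)}, \qquad z_i(u) \;=\; \bigoplus_{j=1}^t u_{i,j}.
\]
A decision tree of depth $kt + 1 = O(k/\delta)$ computes $f_k$ by querying every auxiliary bit to determine $z(u)$ and then querying $y_{z(u)}$, verifying part~(1) of the theorem. The lower bound is the interesting direction: I want to show that any $\G$-impurity-based heuristic is seduced into querying the $y_b$'s in a bad order, reaching depth $2^k$ before it can approximate $f_k$ well.

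Via Lemma~\ref{lem:gain-vs-diff} it suffices to track squared differences of conditional means up to the factor $\kappa = \kappa(\alpha,L,\delta)$. At an internal node $v$ whose restriction $\pi_v$ only fixes memory-bit values (the ``bad path'' I will track), two calculations drive the argument. For an unfixed $y_b$, the difference $\Ex{x \sim \D}{(f_v)_{y_b = 1}(x)} - \Ex{x \sim \D}{(f_v)_{y_b = 0}(x)}$ equals exactly $\pr{}{z(u)=b}$, whose maximum over $b$ is $2^{-\Theta(k)}$. For an auxiliary bit $u_{i,j}$, a short Fourier expansion factors the difference as
\[
  \Bigl(\prod_{j'\ne j}(1-2p_{i,j'})\Bigr) \cdot \sum_{b_{-i}} \Bigl(\prod_{i'\ne i}\pr{}{z_{i'}=b_{i'}}\Bigr) (r_{b_{-i},1}-r_{b_{-i},0}),
\]
where $r_b \in [\delta,1-\delta]$ is the conditional bias of $y_b$ when $b$ is still unfixed, and $r_b \in \zo$ otherwise. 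The XOR factor has magnitude at most $(1-2\delta)^{t-1}$; combined with a careful bound on the cross-sum, an appropriate choice of $t$ makes the auxiliary squared difference a factor $e^{-\Omega(k)}$ smaller than the memory one, overwhelming $\kappa$.

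An induction on the depth of $v$ then shows that the greedy split at every internal node along the bad path is another unfixed memory bit, as long as fewer than $2^k$ have been fixed; hence the tree's depth along this path must reach $2^k$ before any auxiliary bit is ever queried. To finish the error bound, note that any root-to-leaf path of length $d < 2^k$ leaves $\Omega(2^k)$ memory addresses unfixed, and conditional on the event $\{z(u) \in \text{unfixed addresses}\}$---which has probability $\Omega(1)$ under $\D$---the leaf's value depends on a $\delta$-balanced memory bit, so the constant prediction at the leaf errs with probability $\Omega(\delta)$. The main technical obstacle, I expect, will be bounding the cross-sum uniformly along the bad path: when a memory bit is fixed by $\pi_v$, the corresponding $r_b$ jumps from $[\delta,1-\delta]$ to $\zo$, which naively threatens to inflate the cross-sum; the saving grace is that each amplified summand is weighted by $2^{-\Theta(k)}$, so the total contribution from fixed addresses stays $O(1)$ and is still killed by the residual XOR factor.
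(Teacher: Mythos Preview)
Your construction with disjoint blocks of size $t = \Theta(1/\delta)$ does not give the claimed comparison between memory-bit and auxiliary-bit purity gains. The XOR factor $\prod_{j'\ne j}(1-2p_{i,j'})$ has magnitude at most $(1-2\delta)^{t-1}$, which for $t = \Theta(1/\delta)$ is only a \emph{constant} in $(0,1)$, not $e^{-\Omega(k)}$. And the cross-sum is not small in general: take a $\delta$-balanced product distribution in which the memory biases satisfy $\Ex{}{y_b} = 1-\delta$ when $b_i = 0$ and $\Ex{}{y_b} = \delta$ when $b_i = 1$. Then every term $r_{b_{-i},0} - r_{b_{-i},1}$ equals $1 - 2\delta$, the cross-sum is exactly $1-2\delta$, and the auxiliary difference for $u_{i,j}$ is $\Theta(1)$---whereas the memory difference $\pr{}{z(u)=b}$ is only $2^{-\Theta(k)}$. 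For such distributions the greedy heuristic will query auxiliary bits first, and your induction never gets off the ground. Your intuition that the cross-sum is only dangerous once memory bits get \emph{fixed} to $\{0,1\}$ is the error: the problem is already present at the root, for adversarial but perfectly $\delta$-balanced biases on the $y_b$'s.

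Repairing this within the disjoint-block framework forces $t = \Theta(k/\delta)$ (you need $(1-2\delta)^{t-1}$ to beat $2^{-k}/\kappa$), which yields depth $kt + 1 = \Theta(k^2/\delta)$; this is exactly the paper's warm-up in Section~\ref{sec:warmup}, not Theorem~\ref{thm:main}. To reach $O(k/\delta)$ the paper replaces the $k$ disjoint blocks by $k$ \emph{overlapping} subsets $S_1,\dots,S_k \subseteq [ck]$ chosen so that every nonempty symmetric difference $\sym_{i\in I}S_i$ has size at least $\Theta(k/\delta)$; the Gilbert--Varshamov bound supplies such subsets inside a universe of size $ck = \Theta(k/\delta)$. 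With overlap the auxiliary-bit analysis no longer factors through a single coordinate $z_i$; instead one controls all $2^k$ Fourier coefficients of the address distribution at once (Lemma~\ref{lem:almost-uniform-overlap}), and the large-distance property guarantees each nontrivial coefficient is an XOR of $\Theta(k/\delta)$ independent bits, hence $5^{-k}$-small. That coding-theoretic step is the missing idea.
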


An extension of our construction and its analysis shows that the guarantees of Brutzkus et al.~for targets that are $k$-juntas cannot extend to the agnostic setting.  Roughly speaking, while our variant of the addressing function from Theorem~\ref{thm:main} is far from all $k$-juntas, it can be made close to one by fixing most of the memory bits.  We obtain our result by showing that our analysis continues to hold under such a restriction. 

\begin{theorem}[Formal version of Theorem~\ref{thm:agnostic-intro}]\label{thm:agnostic}
Fix $L \ge \alpha > 0$, $\delta \in (0, \frac{1}{2}]$ and $\eps \in (0, 1]$. There are boolean functions $f_1, f_2, \ldots$ such that for every $\delta$-balanced product distribution $\D$ over the domain of $f_k$: (1) $f_k$ is $\eps$-close to an $O(k/\delta)$-junta with respect to $\D$; (2) For every $(\alpha, L)$-impurity function $\G$, any $\G$-impurity based decision tree heuristic, when learning $f_k$ on  $\D$, returns a tree that has either a depth of $\Omega(\eps \cdot 2^k)$ or an $\Omega(1)$ error.
\end{theorem}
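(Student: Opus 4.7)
The plan is to adapt the construction and analysis of Theorem~\ref{thm:main} by restricting the addressing-function variant to a sparse set of ``active'' addresses. Let $T$ be the target constructed in the proof of Theorem~\ref{thm:main}: it has $k$ address blocks of $\Theta(1/\delta)$ underlying bits each (so that each address bit $z_j$ is realized as the parity of its block) and $2^k$ memory bits $\{y_a\}_{a\in\{0,1\}^k}$, with $T(x,y)=y_z$. Pick a set $S\subseteq\{0,1\}^k$ of $\lceil \eps\cdot 2^k \rceil$ addresses and define
\[
    f_k(x,y) \coloneqq y_z\cdot\mathds{1}[z\in S],
\]
which agrees with $T$ after hardwiring $y_a \coloneqq 0$ for every $a\notin S$. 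The role of $S$ is to keep $\Theta(\eps\cdot 2^k)$ memory bits ``active'' while forcing $f_k$ to be close to constantly zero.

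To establish property (1), I will show that $f_k$ is $O(\eps)$-close to the constant-$0$ function, which is a $0$-junta (and hence trivially an $O(k/\delta)$-junta). Since each address bit $z_j$ is the parity of $\Theta(1/\delta)$ independent $\delta$-balanced bits, its induced marginal lies within $O((1-2\delta)^{\Theta(1/\delta)}) = o(1)$ of $\tfrac12$ under any $\delta$-balanced product distribution $\D$. Consequently the induced distribution of $z$ is within $o(1)$ of uniform on $\{0,1\}^k$ in total variation, so
\[
    \Pr_\D[f_k = 1] \le \Pr_\D[z\in S] \le (1+o(1))\cdot\frac{|S|}{2^k} = O(\eps),
\]
which gives the desired closeness after tuning the constant inside $|S|$.

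For property (2), I reuse the purity-gain estimates from the proof of Theorem~\ref{thm:main}, restricted to $S$. At any node $v$ of the tree built by the heuristic, inactive memory bits $y_a$ ($a\notin S$) contribute purity gain $0$ since $f_k$ does not depend on them; each still-unqueried active memory bit $y_a$ ($a\in S$) satisfies $(\mu_0-\mu_1)^2 = \Pr_{\D\mid\pi_v}[z=a]^2$, matching the active-bit lower bound of Theorem~\ref{thm:main}; and each underlying address-block bit has $(\mu_0-\mu_1)^2$ smaller by the parity-cancellation factor $(1-2\delta)^{\Omega(k/\delta)} = e^{-\Omega(k)}$, exactly as in Theorem~\ref{thm:main}. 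Invoking Lemma~\ref{lem:gain-vs-diff}, active memory bits dominate every address-related bit by an $e^{\Omega(k)}$ factor at every node, so the greedy heuristic is forced to query an active memory bit at every internal node for at least $\Omega(\eps\cdot 2^k)$ levels. Before reaching this depth, a constant fraction of the $\Theta(\eps\cdot 2^k)$ active addresses remain unresolved on every root-to-leaf path, which translates, via a short counting argument on the unqueried addresses, into an $\Omega(1)$ misclassification error of the greedy tree on $f_k$.

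The main obstacle I anticipate is verifying that the purity-gain dominance of active memory over address bits persists throughout the recursion, not only at the root. The critical observation is that the restriction defining $f_k$ is \emph{symmetric}: every inactive memory bit is set to the same constant $0$, so the conditional distribution at every internal node has exactly the same structural form as in the proof of Theorem~\ref{thm:main}, just with the active address set $S$ shrunk by those memory bits already queried on the path. This monotone shrinkage preserves the ratio of memory-bit to address-bit purity gains at every node, which is precisely what lets Theorem~\ref{thm:main}'s inner analysis transport to every subtree and yield the claimed depth lower bound.
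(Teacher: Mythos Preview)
Your construction has a genuine gap in the error analysis for property~(2). By setting every inactive memory bit to $0$, you force $f_k$ to be $O(\eps)$-close to the constant-$0$ function---indeed, that is precisely how you establish property~(1). But this same fact kills the $\Omega(1)$-error conclusion: at any leaf $\ell$ of a tree that has queried only memory bits, the restricted function $(f_k)_\ell$ still satisfies
\[
    \mu_\ell \;=\; \Pr_{\D}\bigl[(f_k)_\ell = 1\bigr]
    \;\le\; \Pr_{\D}[z\in S]
    \;=\; O(\eps),
\]
so the optimal leaf label is $0$ and the conditional error at $\ell$ is at most $O(\eps)$. Thus a depth-$1$ (or even depth-$0$) tree already achieves error $O(\eps)$, not $\Omega(1)$. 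Your sentence ``a constant fraction of the active addresses remain unresolved\dots translates into an $\Omega(1)$ misclassification error'' conflates the \emph{fraction of active addresses} unresolved with the \emph{probability mass} on those addresses; the latter is only $O(\eps)$.

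The paper avoids this by a balancing trick: it partitions $\{0,1\}^k$ into $\Azero,\Aone,\Afree$ with $|\Azero|=|\Aone|\approx 2^{k-1}$ and $|\Afree|=\Theta(\eps\cdot 2^k)$, fixes the memory bits on $\Azero$ to $0$ and on $\Aone$ to $1$, and leaves only $\Afree$ free. The resulting function is $\eps$-close to the $ck$-junta $g(x,y)=\Ind[z(x)\in\Aone]$ (not to a constant), while every leaf $\ell$ of a memory-only tree now has $\mu_\ell\in[1/8,7/8]$ simply because $\Pr[z\in\Aone]\approx 1/2$ regardless of how the free bits are set. This is what yields the $\Omega(1)$ error independent of $\eps$. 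Your all-zeros restriction cannot reproduce this; the fix is not cosmetic but requires changing the construction to keep the function balanced. (A smaller issue: disjoint address blocks of size $\Theta(1/\delta)$ give each $z_j$ only constant bias, not the $5^{-k}$-level near-uniformity needed for Lemma~\ref{lem:almost-uniform-overlap}; you need either blocks of size $\Theta(k/\delta)$ as in the warm-up or the overlapping code-based sets of Section~\ref{sec:main-proof}.)
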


\section{Warm-Up: A Weaker Lower Bound}\label{sec:warmup}

We start by giving a simplified construction that proves a weaker version of Theorem~\ref{thm:main}, in which the $O(k/\delta)$ depth in condition~(1) is relaxed to $O(k^2/\delta)$. For integers $c, k \ge 1$, we define a boolean function $f_{c,k}: \{0, 1\}^{ck^2+2^k} \to \{0, 1\}$ as follows. The input of $f_{c,k}$ is viewed as two parts: $ck^2$ \emph{addressing bits} $x_{i,j}$ indexed by $i \in [k]$ and $j \in [ck]$, and $2^k$ \emph{memory bits} $y_a$ indexed by $a \in \{0, 1\}^k$. The function value $f_{c,k}(x, y)$ is defined by first computing $z_i(x) = \bigoplus_{j=1}^{ck}x_{i,j}$ for every $i \in [k]$, and then assigning $f_{c,k}(x, y) = y_{z(x)}$.

In other words, $f_{c,k}$ is a disjoint composition of the $k$-bit addressing function and the parity function over $ck$ bits. Given addressing bits $x$ and memory bits $y$, the function first computes a $k$-bit address by taking the XOR of the addressing bits in each group of size $ck$, and then retrieves the memory bit with the corresponding address. Clearly, $f_{c,k}$ can be computed by a decision tree of depth $ck^2 + 1$ that first queries all the $ck^2$ addressing bits and then queries the relevant memory bit in the last layer.

\subsection{Address is Almost Uniform}
Drawing input $(x, y)$ from a distribution $\D$ naturally defines a distribution over $\{0, 1\}^k$ of the $k$-bit address $z(x) = (z_1(x), z_2(x), \ldots, z_k(x))$. The following lemma states that when $\D$ is a $\delta$-balanced product distribution, the distribution of $z(x)$ is almost uniform in the $\ell_{\infty}$ sense. Furthermore, this almost uniformity holds even if one of the addressing bits $x_{i,j}$ is fixed.

\begin{lemma}\label{lem:almost-uniform}
    Suppose that $c \ge \frac{\ln 5}{\delta}$ and $\D$ is a $\delta$-balanced product distribution over the domain of $f_{c,k}$. Then,
    \[\left|\pr{(x, y) \sim \D}{z(x) = a} - 2^{-k}\right| \le 5^{-k}, \forall a \in \{0, 1\}^k.\]
    Furthermore, for every $i \in [k]$, $j \in [ck]$ and $b \in \{0, 1\}$,
    \[\left|\pr{(x, y) \sim \D}{z(x) = a|x_{i,j} = b} - 2^{-k}\right| \le 5^{-k}, \forall a \in \{0, 1\}^k.\]
\end{lemma}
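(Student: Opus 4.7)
The plan is to use two standard facts: (i) an XOR of independent bits has bias equal to the product of the individual biases, and (ii) the $z_i$'s are independent because they depend on disjoint blocks of the product distribution. With $c \ge \ln 5/\delta$, each individual bias is so small ($\le 25^{-k}$) that the claimed $5^{-k}$ bound has substantial slack.

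First I would unpack independence. Since $z_i(x) = \bigoplus_{j=1}^{ck} x_{i,j}$ depends only on the $i$-th block of addressing bits, and $\D$ is a product distribution, the random variables $z_1(x),\ldots,z_k(x)$ are mutually independent. Hence for every $a \in \{0,1\}^k$,
\[
    \pr{(x,y)\sim\D}{z(x)=a} = \prod_{i=1}^{k} \pr{(x,y)\sim\D}{z_i(x)=a_i}.
\]
Next I would apply the standard identity: if $X_1,\ldots,X_m$ are independent with $\Pr[X_j=1]=q_j$, then $\Pr[\bigoplus_j X_j = 0]-\Pr[\bigoplus_j X_j =1] = \prod_j (1-2q_j)$. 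Writing $p_{i,j} = \pr{}{x_{i,j}=1} \in [\delta,1-\delta]$, this gives $|2\Pr[z_i=a_i]-1| \le (1-2\delta)^{ck}$.

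Then I would convert the bias bound to the desired form. Using $1-2\delta \le e^{-2\delta}$ and $c\ge \ln 5/\delta$, we get $(1-2\delta)^{ck}\le e^{-2\delta c k}\le 25^{-k}$. Writing $\Pr[z_i=a_i]=\tfrac12(1+\eps_i)$ with $|\eps_i|\le 25^{-k}$,
\[
    \bigl|\Pr[z(x)=a]-2^{-k}\bigr|
    = 2^{-k}\Bigl|\prod_{i=1}^k(1+\eps_i)-1\Bigr|
    \le 2^{-k}\bigl((1+25^{-k})^k - 1\bigr).
\]
A one-line calculation (e.g.\ $(1+25^{-k})^k-1 \le k\cdot 25^{-k}\cdot e^{k\cdot 25^{-k}}$) shows the right-hand side is comfortably below $5^{-k}$ for all $k\ge 1$, proving the first part.

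For the conditional statement, conditioning on $x_{i_0,j_0}=b$ leaves $\D$ a product distribution on the remaining coordinates, so the $z_i$'s remain mutually independent with the same marginals for $i\ne i_0$. For $i=i_0$, $z_{i_0}$ becomes $b\oplus\bigoplus_{j\ne j_0}x_{i_0,j}$, an XOR of $ck-1$ independent $\delta$-balanced bits (up to the constant $b$), so its bias is at most $(1-2\delta)^{ck-1}\le e\cdot 25^{-k}$. Plugging this into the same product bound still gives error $\le 5^{-k}$ with room to spare. The argument is essentially bookkeeping; the only ``obstacle'' is being careful that the slack between $25^{-k}$ and $5^{-k}$ absorbs both the factor $k$ from the product and the loss of one factor of $(1-2\delta)$ in the conditional case, which it does easily.
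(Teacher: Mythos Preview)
Your proposal is correct and follows essentially the same approach as the paper: both use independence of the $z_i$'s (disjoint blocks), the standard XOR-bias identity, and then bound the deviation of the product $\prod_i \Pr[z_i=a_i]$ from $2^{-k}$. The only differences are cosmetic bookkeeping---the paper uses the looser per-coordinate bias bound $\tfrac12\cdot 5^{-k}$ and handles the upper and lower tails separately via $(1+x)^k\le 1+2^kx$ and $(1-x)^k\ge 1-kx$, whereas you keep the tighter $25^{-k}$ and bound $\bigl|\prod(1+\eps_i)-1\bigr|$ in one shot (which is valid by convexity of $t\mapsto t^k$, though worth a word since some $\eps_i$ may be negative).
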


The proof of Lemma~\ref{lem:almost-uniform} uses the following simple fact, which states that the XOR of independent biased random bits is exponentially close to an unbiased coin flip.

\begin{fact}\label{fact:XOR-bias}
    Suppose that $x_1, x_2, \ldots, x_n$ are independent Bernoulli random variables, each with an expectation between $\delta$ and $1 - \delta$. Then, $\left|\pr{}{x_1 \oplus x_2 \oplus \cdots \oplus x_n = 1} - \frac{1}{2}\right| \le \frac{1}{2}(1 - 2\delta)^n \le \frac{1}{2}\exp(-2\delta n)$.
\end{fact}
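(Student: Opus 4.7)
The plan is to pass from $\{0,1\}$-valued random bits to $\{-1,+1\}$-valued ones, where XOR becomes multiplication and independence lets the expectation factor. Concretely, I would set $y_i \coloneqq 1 - 2x_i \in \{-1, +1\}$ and observe that $(-1)^{x_1 \oplus \cdots \oplus x_n} = (-1)^{x_1 + \cdots + x_n} = \prod_{i=1}^n (1 - 2x_i) = \prod_{i=1}^n y_i$, since XOR agrees with sum modulo $2$ and $(-1)^{2k}=1$.

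Next I would take expectations on both sides. On the left, $\mathbb{E}[(-1)^{x_1 \oplus \cdots \oplus x_n}] = 1 - 2\Pr[x_1 \oplus \cdots \oplus x_n = 1]$; on the right, by independence, $\mathbb{E}\bigl[\prod_i y_i\bigr] = \prod_i (1 - 2\mathbb{E}[x_i])$. Since each $\mathbb{E}[x_i] \in [\delta, 1 - \delta]$, each factor satisfies $|1 - 2\mathbb{E}[x_i]| \le 1 - 2\delta$, and so $\bigl|\prod_i (1 - 2\mathbb{E}[x_i])\bigr| \le (1-2\delta)^n$. Rearranging gives the first claimed inequality $\bigl|\Pr[x_1 \oplus \cdots \oplus x_n = 1] - \tfrac{1}{2}\bigr| \le \tfrac{1}{2}(1-2\delta)^n$.

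The second inequality $(1-2\delta)^n \le \exp(-2\delta n)$ is immediate from the standard bound $1 - t \le e^{-t}$ applied with $t = 2\delta \in [0,1]$ and then raised to the $n$-th power. There is no real obstacle here; the only thing to be slightly careful about is the identification of $(-1)^{\text{sum}}$ with $(-1)^{\text{XOR}}$, which is why I would explicitly note the mod-$2$ step. This is why a two- or three-line proof suffices once the $\pm 1$ substitution is in place.
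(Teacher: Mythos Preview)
Your proof is correct and is exactly the standard argument for this well-known bound. The paper itself does not supply a proof: it states this as a \emph{Fact} and uses it directly, so there is nothing further to compare.
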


\begin{proof}[Proof of Lemma~\ref{lem:almost-uniform}]
    Since $z_i(x) = \bigoplus_{j=1}^{ck}x_{i,j}$ and $\D$ is $\delta$-balanced, Fact~\ref{fact:XOR-bias} gives
    \[
        \left|\pr{(x, y) \sim \D}{z_i(x) = 1} - \frac{1}{2}\right|
    \le \frac{1}{2}\exp(-2\delta ck)
    \le \frac{1}{2}\cdot 5^{-k}.
    \]
    Note that the bits of $z(x)$ are independent, so $\pr{(x, y) \sim \D}{z(x) = a}$ is given by
    \[
        \prod_{i=1}^{k}\pr{(x, y) \sim \D}{z_i(x) = a_i}
    \le \left(\frac{1}{2} + \frac{1}{2}\cdot 5^{-k}\right)^k
    =   2^{-k} \cdot (1 + 5^{-k})^k
    \le 2^{-k} \cdot (1 + (2/5)^k)
    =   2^{-k} + 5^{-k},
    \]
    where the third step applies $(1 + x)^k \le 1 + 2^kx$ for $x \in [0, 1]$ and integers $k \ge 1$.
    Similarly,
    \[
        \pr{(x, y) \sim \D}{z(x) = a}
    \ge \left(\frac{1}{2} - \frac{1}{2}\cdot 5^{-k}\right)^k
    \ge 2^{-k}\cdot (1 - k \cdot 5^{-k})
    \ge 2^{-k} - 5^{-k},
    \]
    where the last two steps apply $(1 - x)^k \ge 1 - kx$ and $k\cdot2^{-k} \le 1$. This proves the first part.

    The proof of the ``furthermore'' part is essentially the same, except that conditioning on $x_{i,j} = b$, $z_i(x)$ becomes the XOR of $ck - 1$ independent bits and $b$. By Fact~\ref{fact:XOR-bias}, we have
    \[
        \left|\pr{(x, y) \sim \D}{z_i(x) = 1|x_{i,j} = b} - \frac{1}{2}\right|
    \le \frac{1}{2}\exp(-2\delta(ck - 1))
    \le \frac{1}{2}\exp(-\delta ck)
    \le \frac{1}{2}\cdot 5^{-k},
    \]
    and the rest of the proof is the same.
\end{proof}

\subsection{Memory Bits are Queried First}

The following technical lemma states that the purity gain of $f_{c,k}$ is maximized by a memory bit, regardless of the impurity function and the data distribution. Therefore, when an impurity-based algorithm (in the sense of Definition~\ref{def:impurity-algo}) learns $f_{c,k}$, the root of the decision tree will always query a memory bit. Furthermore, this property also holds for restrictions of $f_{c,k}$ as long as the restriction only involves the memory bits.

\begin{lemma}\label{lem:memory-first}
    Fix $L \ge \alpha > 0$ and $\delta \in (0, \frac{1}{2}]$. Let $c_0 = \frac{\ln 5}{\delta}$ and $k_0 = \frac{\ln(2\kappa)}{\ln(5/4)} + 1$, where $\kappa$ is chosen as in Lemma~\ref{lem:gain-vs-diff}. The following holds for every function $f_{c,k}$ with $c \ge c_0$ and $k \ge k_0$: For any $(\alpha, L)$-impurity function $\G$, $\delta$-balanced product distribution $\D$ and restriction $\pi$ of size $< 2^k$ that only contains the memory bits of $f_{c,k}$, the purity gain $\puritygain_{\D}((f_{c,k})_{\pi}, \cdot)$ is maximized by a memory bit.
\end{lemma}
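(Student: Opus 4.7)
The strategy is to apply Lemma~\ref{lem:gain-vs-diff} and reduce the comparison of $\G$-purity gains to a comparison of squared mean-differences $[\Ex{x\sim \D}{g_{v=0}(x)} - \Ex{x\sim \D}{g_{v=1}(x)}]^2$ for $g \coloneqq (f_{c,k})_\pi$, with a multiplicative slack of $\kappa^2$. Since $|\pi| < 2^k$ and $\pi$ fixes only memory bits, at least one memory bit $y_{a^\star}$ remains free, and under $\D$ the addressing bits together with the unfixed memory bits form a $\delta$-balanced product distribution. I will show that some such $y_{a^\star}$ has squared mean-difference $\ge (2^{-k-1})^2$, while every addressing bit $x_{i,j}$ has squared mean-difference $\le (2\cdot 5^{-k})^2$, so that the $(25/4)^k$ gap swamps the $\kappa^2$ slack once $k \ge k_0$.

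For a free memory bit $y_a$, toggling $y_a$ flips $g$ exactly on the event $\{z(x) = a\}$, and the memory bits are independent of $x$, so
\[
    \Ex{(x,y)\sim\D}{g_{y_a=1}(x,y)} - \Ex{(x,y)\sim\D}{g_{y_a=0}(x,y)} = \pr{x\sim\D}{z(x)=a} \ge 2^{-k} - 5^{-k} \ge 2^{-k-1},
\]
by Lemma~\ref{lem:almost-uniform} and $k \ge 1$. For an addressing bit $x_{i,j}$, let $w_a \coloneqq \Ex{y\sim\D}{y'_a} \in [0,1]$, where $y'_a$ is the value $\pi$ assigns to $y_a$ if $a$ is fixed and $y_a$ otherwise; then $\Ex{(x,y)\sim\D}{g(x,y) \mid x_{i,j}=b} = \sum_{a\in\zo^k}\pr{}{z(x)=a\mid x_{i,j}=b}\cdot w_a$. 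Using that $z_1(x),\dots,z_k(x)$ are independent and that conditioning on $x_{i,j}$ only perturbs $z_i(x)$,
\[
    \Delta_a \coloneqq \pr{}{z(x)=a\mid x_{i,j}=1} - \pr{}{z(x)=a\mid x_{i,j}=0} = (-1)^{a_i}\,\tau \cdot \prod_{i'\ne i}\pr{}{z_{i'}(x)=a_{i'}},
\]
with $|\tau| \le 5^{-k}$ by Fact~\ref{fact:XOR-bias} applied to the XOR of the remaining $ck-1$ bits in group $i$ (this is exactly the calculation behind the ``furthermore'' part of Lemma~\ref{lem:almost-uniform}). Since $w_a \in [0,1]$ and $\sum_{a_{-i}}\prod_{i'\ne i}\pr{}{z_{i'}(x)=a_{i'}} = 1$,
\[
    \bigl|\Ex{}{g\mid x_{i,j}=1} - \Ex{}{g\mid x_{i,j}=0}\bigr| \le \sum_a |\Delta_a| = 2|\tau| \le 2\cdot 5^{-k}.
\]

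Combining these bounds via Lemma~\ref{lem:gain-vs-diff},
\[
    \frac{\puritygain_{\D}(g, y_{a^\star})}{\puritygain_{\D}(g, x_{i,j})} \ge \frac{1}{\kappa^2}\cdot \frac{(2^{-k-1})^2}{(2\cdot 5^{-k})^2} = \frac{(25/4)^k}{16\kappa^2},
\]
which exceeds $1$ whenever $k \ge k_0 = \ln(2\kappa)/\ln(5/4) + 1$: this hypothesis yields $(5/4)^{k-1} \ge 2\kappa$, and since $\ln(5/4) < \tfrac{1}{2}\ln(25/4)$ the stated $k_0$ is comfortably larger than the sharper threshold $\ln(4\kappa)/\ln(5/2)$ one would extract from the chain above. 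Fixed memory bits contribute purity gain $0$, so the global maximum is achieved by the free memory bit $y_{a^\star}$, as claimed. The main obstacle is the addressing-bit upper bound: the weights $\{w_a\}$ induced by $\pi$ are adversarial, and one must rule out any conspiracy that amplifies the tiny conditional bias $|\tau|\le 5^{-k}$ up to order $2^{-k}$. It is precisely the product form of $\Delta_a$---inherited from the disjointness of the parity blocks under the product distribution $\D$---that lets us pass to the $w$-independent $\ell_1$ bound $\sum_a |\Delta_a| = 2|\tau|$, which is what buys the $(25/4)^k$-factor separation.
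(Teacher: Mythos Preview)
Your proof is correct and follows the same overall architecture as the paper's: reduce via Lemma~\ref{lem:gain-vs-diff} to comparing mean-differences, lower-bound the free memory bit's difference by $\Pr[z(x)=a]\ge 2^{-k}-5^{-k}$ using Lemma~\ref{lem:almost-uniform}, and upper-bound each addressing bit's difference by the total variation distance between the two conditional laws of $z(x)$.

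The one genuine difference is in how you bound that TV distance. The paper passes through the uniform distribution by the triangle inequality, using both parts of Lemma~\ref{lem:almost-uniform} to get
\[
|\mu_0-\mu_1|\le \tfrac{1}{2}\sum_{a}\bigl(|\P_0(a)-2^{-k}|+|\P_1(a)-2^{-k}|\bigr)\le 2^{k}\cdot 5^{-k}=(2/5)^{k}.
\]
You instead exploit the product structure directly: because the $k$ parity blocks are disjoint and $\D$ is a product distribution, conditioning on $x_{i,j}$ perturbs only the $i$-th coordinate of $z(x)$, so $\Delta_a$ factorizes and $\sum_a|\Delta_a|=2|\tau|\le 2\cdot 5^{-k}$. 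This is a factor $2^{k-1}$ sharper than the paper's bound and yields the larger separation ratio $(25/4)^k/(16\kappa^2)$ rather than the paper's $(5/4)^{2k}/(4\kappa^2)$; your threshold check that $k_0=\ln(2\kappa)/\ln(5/4)+1$ comfortably clears the sharper threshold $\ln(4\kappa)/\ln(5/2)$ is valid since $\kappa\ge 1$. The paper's cruder bound has the advantage of using only the $\ell_\infty$-closeness conclusion of Lemma~\ref{lem:almost-uniform}, which is why it ports verbatim to the overlapping-blocks construction in Section~\ref{sec:main-proof}; your factorization of $\Delta_a$ relies on the disjointness of the parity blocks in the warm-up construction and would not go through unchanged there.
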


\begin{proof}[Proof of Lemma~\ref{lem:memory-first}]
    Fix $c \ge c_0$ and $k \ge k_0$ and shorthand $f$ for $f_{c,k}$. We will prove a stronger claim: with respect to $f_{\pi}$, every memory bit (that is not in $\pi$) gives a much higher purity gain than every addressing bit does.
    
    \paragraph{Purity gain of the memory bits.} Fix a memory bit $y_a$ ($a \in \{0, 1\}^k$) that does not appear in restriction $\pi$. Let $\mu_b = \Ex{(x, y) \sim \D}{f_{\pi, y_a = b}(x, y)}$ for $b \in \{0, 1\}$. By the law of total expectation,
    \begin{align*}
        \mu_b &= \pr{(x, y) \sim \D}{z(x) = a} \cdot \Ex{(x, y) \sim \D}{f_{\pi, y_a = b}(x, y)|z(x) = a}\\
        &\quad\quad+ \pr{(x, y) \sim \D}{z(x) \ne a} \cdot \Ex{(x, y) \sim \D}{f_{\pi, y_a = b}(x, y)|z(x) \ne a}\\
        &=  \pr{(x, y) \sim \D}{z(x) = a} \cdot b + \pr{(x, y) \sim \D}{z(x) \ne a} \cdot \Ex{(x, y) \sim \D}{f_{\pi}(x, y)|z(x) \ne a}.
    \end{align*}
    Here the second step holds since $f_{\pi, y_a = b}(x, y)$ evaluates to $b$ when the address $z(x)$ equals $a$, and $f_{\pi, y_a = b}$ agrees with $f_{\pi}$ when $z(x) \ne a$.
    Since only the first term above depends on $b$, we have
    \[
        |\mu_0 - \mu_1|
    =   \pr{(x, y) \sim \D}{z(x) = a}
    \ge 2^{-k} - 5^{-k}
    \ge \frac{1}{2}\cdot 2^{-k},
    \]
    where the second step follows from $c \ge c_0$ and Lemma~\ref{lem:almost-uniform}.
    Finally, by Lemma~\ref{lem:gain-vs-diff},
        $\puritygain_{\D}(f_{\pi}, y_a) \ge \frac{1}{\kappa}(\mu_0 - \mu_1)^2 \ge \frac{1}{4\kappa}\cdot 2^{-2k}$.

    \paragraph{Purity gain of the addressing bits.} Similarly, we fix an addressing bit $x_{i,j}$ and define the average $\mu_b = \Ex{(x, y) \sim \D}{f_{\pi, x_{i,j} = b}(x, y)}$. Since $\D$ is a product distribution, $\mu_b$ is equal to the conditional expectation $\Ex{(x, y) \sim \D}{f_{\pi}(x, y) | x_{i,j} = b}$. Then, by the law of total expectation, we can write $\mu_b$ as
    \begin{align*}
        \mu_b
    &=  \sum_{a \in \{0, 1\}^k}\pr{(x, y) \sim \D}{z(x) = a|x_{i,j} = b} \cdot \Ex{(x, y) \sim \D}{f_{\pi}(x, y) | z(x) = a, x_{i,j} = b}\\
    &=  \sum_{a \in \{0, 1\}^k}\pr{(x, y) \sim \D}{z(x) = a|x_{i,j} = b} \cdot \Ex{(x, y) \sim \D}{f_{\pi}(x, y) | z(x) = a}.
    \end{align*}
    Here the second step holds since $f_{\pi}(x, y)$ and $x_{i,j}$ are independent conditioning on the address $z(x)$; in other words, once we know the value of $z(x)$, it doesn't matter how $x$ is set in determining the output of $f$.
    
    Let $c_{a}$ denote $\Ex{(x, y) \sim \D}{f_{\pi}(x, y) | z(x) = a}$, and let $\P_b$ be the distribution of $z(x)$ conditioning on $x_{i,j} = b$. Then, $\mu_b$ is exactly given by $\Ex{a \sim \P_b}{c_a}$. Since each $c_a$ is in $[0, 1]$, $|\mu_0 - \mu_1|$ is upper bounded by the total variation distance between $\P_0$ and $\P_1$:
    \begin{align*}
        |\mu_0 - \mu_1|
    &\le \frac{1}{2}\sum_{a \in \{0, 1\}^k}|\P_0(a) - \P_1(a)|\\
    &\le \frac{1}{2}\sum_{a \in \{0, 1\}^k}\left(|\P_0(a) - 2^{-k}| + |\P_1(a) - 2^{-k}|\right)\\
    &\le \frac{1}{2}\cdot 2^{k}\cdot 2\cdot 5^{-k}
    =   (2/5)^k. \tag{Lemma~\ref{lem:almost-uniform}}
    \end{align*}
    Finally, applying Lemma~\ref{lem:gain-vs-diff} shows that
        $\puritygain_{\D}(f_{\pi}, x_{i,j}) \le \kappa (\mu_0 - \mu_1)^2 \le \kappa \cdot (2/5)^{2k}$.

    Recall that $k \ge k_0 > \frac{\ln(2\kappa)}{\ln(5/4)}$, so we have $\kappa \cdot (2/5)^{2k}
    <   \frac{1}{4\kappa}\cdot 2^{-2k}$.
    Therefore, the purity gain of every memory bit outside the restriction is strictly larger than that of any addressing bit, and the lemma follows immediately.
\end{proof}

\begin{remark}\label{remark:finite-sample}
    The proof above bounds the purity gain of each memory bit and each addressing bit by $\Omega((1/2)^{2k})$ and $O((2/5)^{2k})$ respectively. For Lemma~\ref{lem:memory-first} to hold when the purity gains are estimated from a finite dataset, it suffices to argue that each estimate is accurate up to an $O((2/5)^{2k})$ additive error. By a standard concentration argument, to estimate the purity gains for all restriction $\pi$ of size $\le h$, $2^{O(h + k)}$ training examples are sufficient. When applied later in the proof of Theorem~\ref{thm:main}, this finite-sample version of Lemma~\ref{lem:memory-first} would imply that impurity-based algorithms need to build a tree of depth $h$ as soon as the sample size reaches $2^{\Omega(h+k)}$.
\end{remark}

\subsection{Proof of the Weaker Version}

Now we are ready to prove the weaker version of Theorem~\ref{thm:main}. We will apply Lemma~\ref{lem:memory-first} to argue that the tree returned by an impurity-based algorithm never queries an addressing bit (unless all the $2^k$ memory bits have been queried), and then show that every such decision tree must have an error of $\Omega(\delta)$.

\begin{proof}[Proof of Theorem~\ref{thm:main} (weaker version)]
    Fix integer $c \ge \frac{\ln 5}{\delta}$ and consider the functions $f_{c,1}, f_{c,2}, \ldots$. Since each $f_{c,k}$ is represented by a decision tree of depth $ck^2 + 1 = O(k^2/\delta)$, it remains to show that impurity-based algorithms fail to learn $f_{c,k}$. Fix integer $k \ge k_0$ (where $k_0$ is chosen as in Lemma~\ref{lem:memory-first}) and $\delta$-balanced product distribution $\D$ over the domain of $f_{c,k}$. In the following, we use shorthand $f$ for $f_{c,k}$.
    
    \paragraph{Small trees never query addressing bits.} Let $T$ be the decision tree returned by a $\G$-impurity-based algorithm when learning $f$ on $\D$. If $T$ has depth $> 2^k$, we are done, so we assume that $T$ has depth at most $2^k$. We claim that $T$ never queries the addressing bits of $f$. Suppose otherwise, that an addressing bit is queried at node $v$ in $T$, and no addressing bits are queried by the ancestors of $v$. Then, the restriction $\pi_v$ associated with node $v$ only contains the memory bits of $f$. Since $T$ has depth $\le 2^k$, the size of $\pi_v$ is strictly less than $2^k$. Then, by Lemma~\ref{lem:memory-first}, the $\G$-purity gain with respect to $f_v$ is maximized by a memory bit. This contradicts the assumption that the algorithm is $\G$-impurity-based.
    
    \paragraph{Trivial accuracy if no addressing bits are queried.} We have shown that $T$ only queries the memory bits of $f$. We may further assume that $T$ queries \emph{all} the $2^k$ memory bits before reaching any of its leaves, i.e., $T$ is a full binary tree of depth $2^k$. This assumption is without loss of generality because we can add dummy queries on the memory bits to the leaves of depth $< 2^k$, and label all the resulting leaves with the same bit. This change does not modify the function represented by $T$.
    
    Assuming that $T$ is full, every leaf $\ell$ of $T$ is labeled by $2^k$ bits $(c_a)_{a \in \{0, 1\}^k}$, meaning that each memory bit $y_a$ is fixed to $c_a$ on the root-to-$\ell$ path.
    The expectation of the restricted function $f_{\ell}$ is then given by $\mu_{\ell} \coloneqq \Ex{(x, y) \sim \D}{c_{z(x)}}$.
    Clearly, the error of $T$ is minimized when each leaf $\ell$ is labeled with $\1{\mu_{\ell} \ge \frac{1}{2}}$, and the conditional error when reaching leaf $\ell$ is $\min(\mu_{\ell}, 1 - \mu_{\ell})$.
    
    It remains to show that for a large fraction of leaves $\ell$, $\mu_{\ell}$ is bounded away from $0$ and $1$, so that $\min(\mu_{\ell}, 1 - \mu_{\ell})$ is large. When leaf $\ell$ is randomly chosen according to distribution $\D$, the corresponding $\mu_{\ell}$ is given by
    \begin{equation}\label{eq:leaf-mean}
        \mu_{\ell}
    =   \sum_{a \in \{0, 1\}^k}\pr{(x, y) \sim \D}{z(x) = a}\cdot c_a,
    \end{equation}
    where $(c_a)_{a \in \{0, 1\}^k}$ are $2^k$ independent Bernoulli random variables with means in $[\delta, 1 - \delta]$.
    
    By Lemma~\ref{lem:almost-uniform} and our choice of $c \ge c_0$, $\pr{(x, y) \sim \D}{z(x) = a} \le 2 \cdot 2^{-k}$ holds for every $a \in \{0, 1\}^k$. Thus, each term in~\eqref{eq:leaf-mean} is bounded between $0$ and $2\cdot 2^{-k}$. Furthermore, since each $c_a$ has expectation at least $\delta$, $\Ex{}{\mu_{\ell}} \ge \delta$. Then, Hoeffding's inequality guarantees that over the random choice of $(c_a)_{a \in \{0, 1\}^k}$, $\mu_{\ell} \ge \delta / 2$ holds with probability at least
        $1 - \exp\left(-\frac{2\cdot(\delta/2)^2}{2^k\cdot(2\cdot 2^{-k})^2}\right)
    =   1 - \exp(-2^k\delta^2/8)$,
    which is lower bounded by $2/3$ for all sufficiently large $k$. By a symmetric argument, $\mu_{\ell} \le 1 - \delta / 2$ also holds with probability $\ge 2/3$. Therefore, with probability $\ge 1/3$ over the choice of leaf $\ell$, $\mu_{\ell} \in [\delta / 2, 1 - \delta / 2]$ holds and thus the conditional error on leaf $\ell$ is at least $\delta/2$. This shows that the error of $T$ over distribution $\D$ is lower bounded by $\delta/6$, which completes the proof.
\end{proof}

\section{Proof of Theorem~\ref{thm:main}}\label{sec:main-proof}
When proving the weaker version of Theorem~\ref{thm:main}, each hard instance $f_{c,k}$ has $\Theta(k^2)$ addressing bits grouped into $k$ disjoint subsets, and the $k$-bit address is defined by the XOR of bits in each subset. We will prove Theorem~\ref{thm:main} using a slightly different construction that computes address from $k$ overlapping subsets of only $O(k)$ addressing bits.

For integers $c, k \ge 1$ and a list of $k$ sets $S = (S_1, S_2, \ldots, S_k)$ where each $S_i \subseteq [ck]$, we define a boolean function $f_{c,k,S}: \{0, 1\}^{ck + 2^k} \to \{0, 1\}$ as follows. The input of $f_{c,k,S}$ is again divided into two parts: $ck$ addressing bits $x_1, x_2, \ldots, x_{ck}$ and $2^k$ memory bits $y_a$ indexed by a $k$-bit address $a$. The function value $f(x, y)$ is computed by taking $z_i(x) = \bigoplus_{j \in S_i}x_j$ and then $f(x, y) = y_{z(x)}$. Clearly, $f_{c,k,S}$ can be computed by a decision tree of depth $ck + 1$ that first queries all the $ck$ addressing bits $x_1, x_2, \ldots, x_{ck}$, and then queries the relevant memory bit $y_{z(x)}$.

Let $\sym_{i=1}^{k}S_i$ denote the $k$-ary symmetric difference of sets $S_1$ through $S_k$, i.e., the set of elements that appear in an odd number of sets. We say that a list of sets $S = (S_1, S_2, \ldots, S_k)$ has \emph{distance} $d$, if any non-empty collection of sets has a symmetric difference of size at least $d$, i.e., $\left|\sym_{i \in I}S_i\right| \ge d$ for every non-empty $I \subseteq [k]$. In the following, we prove analogs of Lemmas \ref{lem:almost-uniform}~and~\ref{lem:memory-first} for function $f_{c,k,S}$ assuming that $S$ has a large distance; Theorem~\ref{thm:main} would then follow immediately. 

\begin{lemma}\label{lem:almost-uniform-overlap}
    Suppose that $\D$ is a $\delta$-balanced product distribution over the domain of $f_{c,k,S}$ and $S$ has distance $d \ge \frac{\ln 5}{\delta}\cdot k$. Then,
    \[\left|\pr{(x, y) \sim \D}{z(x) = a} - 2^{-k}\right| \le 5^{-k}, \forall a \in \{0, 1\}^k.\]
    Furthermore, for every $i \in [ck]$ and $b \in \{0, 1\}$,
    \[\left|\pr{(x, y) \sim \D}{z(x) = a|x_i = b} - 2^{-k}\right| \le 5^{-k}, \forall a \in \{0, 1\}^k.\]
\end{lemma}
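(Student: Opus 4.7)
The approach is Fourier analysis over $\mathbb{F}_2^k$, which is the natural tool here because the $z_i(x)$'s are no longer independent (the sets $S_i$ overlap), so the coordinate-wise multiplication argument used in Lemma~\ref{lem:almost-uniform} breaks down. Using the standard identity $\Ind[z = a] = 2^{-k}\sum_{T \subseteq [k]} (-1)^{T\cdot(z\oplus a)}$, I would write
\[
    \pr{(x,y)\sim\D}{z(x) = a} - 2^{-k}
=   2^{-k} \sum_{\emptyset \ne T \subseteq [k]} (-1)^{T \cdot a}\, \Ex{(x,y)\sim\D}{(-1)^{\bigoplus_{i \in T} z_i(x)}}.
\]
The next step is to rewrite the inner XOR: $\bigoplus_{i \in T} z_i(x) = \bigoplus_{j \in U_T} x_j$, where $U_T \coloneqq \sym_{i \in T} S_i$. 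The distance hypothesis then guarantees $|U_T| \ge d$ for every non-empty $T$.

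Now, since the addressing bits $x_j$ are independent under $\D$ with biases in $[\delta, 1-\delta]$, the expectation of the character factorizes:
\[
    \Ex{(x,y)\sim\D}{(-1)^{\bigoplus_{j \in U_T} x_j}}
=   \prod_{j \in U_T}(1 - 2p_j),
\]
and each factor has magnitude at most $1 - 2\delta$. Therefore the character expectation is bounded in absolute value by $(1-2\delta)^{|U_T|} \le (1-2\delta)^d \le e^{-2\delta d} \le 5^{-2k}$ using $d \ge (\ln 5 / \delta)\cdot k$. Summing the $2^k - 1$ non-empty terms and multiplying by $2^{-k}$ yields the bound $5^{-2k} \le 5^{-k}$, which gives the first part.

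For the ``furthermore'' part, conditioning on $x_i = b$ leaves the remaining bits independent with biases still in $[\delta, 1-\delta]$, and the Fourier computation proceeds identically except that if $i \in U_T$ we pull out a fixed sign $(-1)^b$ and lose one factor, leaving the magnitude bound $(1-2\delta)^{|U_T|-1} \le (1-2\delta)^{d-1}$. An extra constant factor $(1-2\delta)^{-1} \le e^{2\delta} \le e$ is absorbed by the slack between $5^{-2k}$ and $5^{-k}$ (for any $k \ge 1$), so the same final bound holds.

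The main obstacle is conceptual rather than technical: one must recognize that the distance hypothesis on $S$ is exactly the right Fourier-theoretic condition to control \emph{every} non-trivial character of the joint distribution of $(z_1(x), \ldots, z_k(x))$. Once this is seen, the calculation is routine, and the factor-of-two slack in the hypothesis $d \ge (\ln 5/\delta)k$ (versus the minimum $d \ge (\ln 5/(2\delta))k$ that the Fourier bound needs) cleanly handles both the ``$2^k - 1$ vs.\ $2^k$'' loss and the loss of one factor in the conditional case.
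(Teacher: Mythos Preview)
Your proposal is correct and essentially identical to the paper's proof: the paper expands $\Ind[z(x)=a]=\prod_{i=1}^k\frac{(1-2a_i)(1-2z_i(x))+1}{2}$, which is precisely your Fourier identity in $(1-2b)$ rather than $(-1)^b$ notation, and then bounds each nonempty character via $\bigoplus_{i\in T}z_i(x)=\bigoplus_{j\in U_T}x_j$ with $|U_T|\ge d$, exactly as you do. The only cosmetic difference is that the paper records the per-character bound as $5^{-k}$ rather than the sharper $5^{-2k}$ you note; your use of that extra slack to handle the $d-1$ loss in the conditional part is a slightly cleaner bookkeeping of the same step the paper dispatches with ``the remaining proof goes through.''
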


We prove Lemma~\ref{lem:almost-uniform-overlap} by noting that the distribution of $z(x)$ has exponentially small Fourier coefficients (except the degree-$0$ one) under the assumptions, and is thus close to the uniform distribution over $\{0, 1\}^k$. More concretely, our goal is to show that, for every $I \subseteq [k]$ the quantity $\bigoplus_{i \in I}z_i(x)$ is $1$ with probability nearly exactly $\frac{1}{2}$. Afterwards, we will show this is sufficient to guarantee that the distribution of $z(x)$ is close to the uniform distribution.

\begin{proof}[Proof of Lemma~\ref{lem:almost-uniform-overlap}]
    Since $z_i(x) = \bigoplus_{j \in S_i}x_j$, we have $\bigoplus_{i \in I}z_i(x) = \bigoplus_{j \in S_I}x_j$ for every $I \subseteq [k]$, where $S_I = \sym_{i \in I}S_i$ is the symmetric difference of the corresponding sets. Since $S$ has distance $d$, $|S_I| \ge d$ for every non-empty $I \subseteq [k]$ and thus $\bigoplus_{i \in I}z_i(x)$ is the XOR of at least $d$ independent bits. Note that $1 - 2\bigoplus_{i \in I}z_i(x) = \prod_{i \in I}(1 - 2z_i(x))$. By Fact~\ref{fact:XOR-bias} and $d \ge \frac{\ln 5}{\delta}\cdot k$,
    \begin{equation}\label{eq:product-bound}
        \left|\Ex{(x, y) \sim \D}{\prod_{i \in I}(1 - 2z_i(x))}\right|
    =   2\cdot \left|\pr{(x, y) \sim \D}{\bigoplus_{i \in I}z_i(x) = 1} - \frac{1}{2}\right|
    \le \exp(-2\delta d)
    \le 5^{-k}.
    \end{equation}
    Note that for $b_1, b_2 \in \{0, 1\}$, we have $\1{b_1 = b_2} = \frac{(1 - 2b_1)(1 - 2b_2) + 1}{2}$. Therefore, for every $a \in \{0, 1\}^k$,
    \begin{align*}
        \left|\pr{(x, y) \sim \D}{z(x) = a} - 2^{-k}\right|
    &=  \left|\Ex{(x, y) \sim \D}{\prod_{i=1}^{k}\frac{(1 - 2a_i)(1 - 2z_i(x)) + 1}{2}} - 2^{-k}\right|\\
    &=  2^{-k}\left|\sum_{I \subseteq [k]}\Ex{(x, y) \sim \D}{\prod_{i\in I}(1 - 2a_i)(1 - 2z_i(x))} - 1\right| \tag{expansion of product and linearity}\\
    &=  2^{-k}\left|\sum_{I \subseteq [k]: I\ne\emptyset}\Ex{(x, y) \sim \D}{\prod_{i\in I}(1 - 2a_i)(1 - 2z_i(x))}\right| \tag{empty product equals $1$}\\
    &\le 2^{-k}\sum_{I \subseteq [k]:I\ne\emptyset}\left|\prod_{i\in I}(1 - 2a_i)\right|\cdot \left|\Ex{(x, y) \sim \D}{\prod_{i\in I}(1 - 2z_i(x))}\right| \tag{triangle inequality and linearity}\\
    &=  2^{-k}\sum_{I \subseteq [k]:I\ne\emptyset}\left|\Ex{(x, y) \sim \D}{\prod_{i\in I}(1 - 2z_i(x))}\right| \tag{$|1 - 2a_i| = 1$}\\
    &\le 2^{-k}\cdot (2^k - 1) \cdot 5^{-k} < 5^{-k}. \tag{Inequality~\eqref{eq:product-bound}}
    \end{align*}
    
    The proof of the ``furthermore'' part is the same, except that after conditioning on $x_i = b$, each $\bigoplus_{j \in I}z_j(x)$ is now the XOR of at least $d - 1$ independent bits, and the remaining proof goes through.
\end{proof}

We note that the proof of Lemma~\ref{lem:memory-first} depends on the definition of $z(x)$ only through the application of Lemma~\ref{lem:almost-uniform}. Thus, Lemma~\ref{lem:almost-uniform-overlap} directly implies the following analog of Lemma~\ref{lem:memory-first}:

\begin{lemma}\label{lem:memory-first-overlap}
    Fix $L \ge \alpha > 0$ and $\delta \in (0, \frac{1}{2}]$. Let $c_0 = \frac{\ln 5}{\delta}$ and $k_0 = \frac{\ln(2\kappa)}{\ln(5/4)} + 1$, where $\kappa$ is chosen as in Lemma~\ref{lem:gain-vs-diff}. The following holds for every function $f_{c,k,S}$ such that $k \ge k_0$ and $S$ has distance $c_0 k$: For any $(\alpha, L)$-impurity function $\G$, $\delta$-balanced product distribution $\D$ and restriction $\pi$ of size $< 2^k$ that only contains the memory bits of $f_{c,k,S}$, the purity gain $\puritygain_{\D}((f_{c,k,S})_{\pi}, \cdot)$ is maximized by a memory bit.
\end{lemma}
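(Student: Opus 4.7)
The plan is to recycle the proof of Lemma~\ref{lem:memory-first} essentially verbatim, with Lemma~\ref{lem:almost-uniform-overlap} substituted for Lemma~\ref{lem:almost-uniform}. Since the earlier proof only used properties of the address distribution (not the internal structure of $z(x)$), and since Lemma~\ref{lem:almost-uniform-overlap} provides exactly the same two guarantees under the distance-$c_0k$ hypothesis on $S$, this substitution should go through cleanly. The only thing to check carefully is that the bound on $|\pr{}{z(x)=a\mid x_i=b}-2^{-k}|$ gives the addressing-bit bound we need, since here $x_i$ may participate in several of the $z_j$'s simultaneously (rather than just one $z_j$ as in the grouped construction).

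First, for any memory bit $y_a$ with $a \notin \pi$, I would write $\mu_b = \Ex{(x,y)\sim\D}{(f_{c,k,S})_{\pi,\, y_a=b}(x,y)}$ and decompose by whether $z(x)=a$. On the event $z(x)=a$ the restricted function equals $b$; otherwise it agrees with $(f_{c,k,S})_\pi$ and does not depend on $b$. Thus $|\mu_0-\mu_1| = \pr{}{z(x)=a}$, which by Lemma~\ref{lem:almost-uniform-overlap} is at least $2^{-k}-5^{-k}\ge \tfrac{1}{2}\cdot 2^{-k}$, so Lemma~\ref{lem:gain-vs-diff} gives a purity gain of at least $\tfrac{1}{4\kappa}\cdot 2^{-2k}$.

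Next, for any addressing bit $x_i$, I would use the product structure of $\D$ to rewrite $\mu_b = \Ex{}{(f_{c,k,S})_\pi \mid x_i=b}$, then condition further on $z(x)$. A crucial point is that conditional on $z(x)$, the value of $(f_{c,k,S})_\pi(x,y)$ is independent of $x_i$: once the address is fixed, $f_{c,k,S}(x,y)$ is just $y_{z(x)}$, a memory bit, and since $\pi$ contains only memory bits, $(f_{c,k,S})_\pi$ inherits this independence. Writing $\P_b$ for the law of $z(x)$ conditional on $x_i=b$ and $c_a = \Ex{}{(f_{c,k,S})_\pi\mid z(x)=a}\in[0,1]$, we have $\mu_b = \Ex{a\sim\P_b}{c_a}$, hence $|\mu_0-\mu_1|\le d_{TV}(\P_0,\P_1)$. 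The triangle inequality against the uniform distribution combined with the ``furthermore'' part of Lemma~\ref{lem:almost-uniform-overlap} bounds this by $\tfrac{1}{2}\cdot 2^k\cdot 2\cdot 5^{-k} = (2/5)^k$, so the purity gain is at most $\kappa\cdot (2/5)^{2k}$.

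Finally, comparing the two bounds, the memory-bit gain strictly dominates the addressing-bit gain whenever $(5/4)^{2k} > 4\kappa^2$, which holds by the choice $k \ge k_0 > \frac{\ln(2\kappa)}{\ln(5/4)}$. There is no real obstacle here beyond transcription; the main conceptual step already happened in proving Lemma~\ref{lem:almost-uniform-overlap}, where the Fourier-style expansion handled the fact that the sets $S_1,\ldots,S_k$ overlap. In particular, the only new subtlety relative to the warm-up is that fixing a single addressing bit $x_i$ can affect multiple $z_j$'s at once, but this is precisely accounted for by the distance hypothesis $d\ge c_0 k$, which ensures that every non-empty XOR of the $z_j$'s is still the XOR of at least $d-1$ independent biased bits after conditioning.
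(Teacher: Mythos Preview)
Your proposal is correct and matches the paper's approach exactly: the paper simply observes that the proof of Lemma~\ref{lem:memory-first} depends on the structure of $z(x)$ only through Lemma~\ref{lem:almost-uniform}, so substituting Lemma~\ref{lem:almost-uniform-overlap} yields Lemma~\ref{lem:memory-first-overlap} immediately. Your write-up is in fact more detailed than the paper's one-sentence justification, and your remark about the one genuine subtlety (a single addressing bit may now influence several $z_j$'s) correctly identifies why the ``furthermore'' part of Lemma~\ref{lem:almost-uniform-overlap} is what makes the transcription go through.
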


Finally, we prove Theorem~\ref{thm:main} by showing the existence of a set family $S$ with a good distance.

\begin{proof}[Proof of Theorem~\ref{thm:main}]
    Fix $\delta \in (0, \frac{1}{2}]$. The Gilbert--Varshamov bound for binary linear codes implies that for some $c = \Theta(1/\delta)$, there exists a binary linear code with rate $\frac{1}{c}$ and relative distance $\frac{\ln 5}{\delta c}$. It follows that for every sufficiently large $k$, there exists $S^{(k)} = (S^{(k)}_1, S^{(k)}_2, \ldots, S^{(k)}_k)$ such that each $S^{(k)}_i \subseteq [ck]$ and $S^{(k)}$ has distance $\frac{\ln 5}{\delta} \cdot k$. This can be done by using the $i$-th basis of the linear code as the indicator vector of subset $S^{(k)}_i$ for each $i \in [k]$.
    
    We prove Theorem~\ref{thm:main} using functions $f_{c,1,S^{(1)}}, f_{c,2,S^{(2)}}, \ldots$. Since each $f_{c,k,S^{(k)}}$ can be represented by a decision tree of depth $ck + 1 = O(k/\delta)$, it remains to prove that impurity-based algorithms fail to learn $f_{c,k,S^{(k)}}$. Lemma~\ref{lem:memory-first-overlap} guarantees that the tree returned by such algorithms either has depth $> 2^k$, or never queries any addressing bits. In the latter case, by the same calculation as in the proof of the weaker version, the decision tree must have an $\Omega(\delta)$ error on distribution $\D$.
\end{proof}

\section{Proof of Theorem~\ref{thm:agnostic}}



We prove Theorem~\ref{thm:agnostic} using the construction of $f_{c,k,S}$ in Section~\ref{sec:main-proof}, where $S = (S_1, S_2, \ldots, S_k)$ is a list of $k$ subsets of $[ck]$ and each $S_i$ specifies how the i-th bit of the address, $z_i(x)$, is computed from the addressing bits $x_1$ through $x_{ck}$. Note that $f_{c,k,S}$ itself depends on $\Omega(2^k)$ input bits and is thus not an $O(k)$-junta. Nevertheless, we will show that, after we fix most of the memory bits of $f_{c,k,S}$, the function is indeed close to a $(ck)$-junta with relevant inputs being the $ck$ addressing bits. Then, as in the proof of Theorem~\ref{thm:main}, we will argue that impurity-based heuristics still query the (unfixed) memory bits before querying any of the addressing bits, resulting in a tree that is either exponentially deep or far from the target function.

\begin{proof}[Proof of Theorem~\ref{thm:agnostic}]
    As in the proof of Theorem~\ref{thm:main}, we can find functions $f_{c,1,S^{(1)}}, f_{c,2,S^{(2)}}, \ldots$ for some $c = \Theta(1/\delta)$ such that each $S^{(k)}$ has distance $\ge \frac{\ln 5}{\delta}\cdot k$. We fix a sufficiently large integer $k$ and shorthand $f$ for $f_{c,k,S^{(k)}}$ in the following.
    
    Partition $\{0, 1\}^k$ into three sets $\Azero$, $\Aone$ and $\Afree$ such that $|\Azero| = |\Aone|$ and $\eps\cdot 2^{k-2} \le |\Afree| \le \eps\cdot 2^{k-1}$. Consider the restriction $\pi$ of function $f$ such that the memory bit $y_a$ is fixed to be $0$ for every $a \in \Azero$ and fixed to be $1$ for every $a \in \Aone$; the memory bits with addresses in $\Afree$ are left as ``free'' variables. We will prove the theorem using $f_{\pi}$ as the $k$-th function in the family.
    
    \paragraph{$f_{\pi}$ is close to a junta.} Consider the function $g: \{0,1\}^{ck + 2^k} \to \{0, 1\}$ defined as $g(x, y) = \1{z(x) \in \Aone}$, where $z(x)$ denotes $(z_1(x), z_2(x), \ldots, z_k(x))$ and each $z_i(x) = \bigoplus_{j \in S^{(k)}_i}x_j$. Clearly, $g(x, y)$ only depends on $x \in \{0, 1\}^{ck}$ and is thus a $(ck)$-junta. Furthermore, for every input $(x, y)$ such that $z(x) \in \Azero$ (resp.\ $z(x) \in \Aone$), both $f_{\pi}$ and $g$ evaluate to $0$ (resp.\ $1$). Thus, $f_{\pi}$ and $g$ may disagree only if $z(x) \in \Afree$. It follows that for every $\delta$-balanced product distribution $\D$,
    \begin{align*}
        \pr{(x, y) \sim \D}{f_{\pi}(x, y) \ne g(x, y)}
    &\le \pr{(x, y) \sim \D}{z(x) \in \Afree}\\
    &\le |\Afree| \cdot (2^{-k} + 5^{-k}) \tag{Lemma~\ref{lem:almost-uniform-overlap}}\\
    &\le \eps\cdot 2^{k-1}\cdot (2^{-k} + 5^{-k}) < \eps. \tag{$|\Afree| \le \eps\cdot 2^{k-1}$}
    \end{align*}
    Therefore, $f_{\pi}$ is $\eps$-close to an $O(k/\delta)$-junta (namely, $g$) with respect to distribution $\D$.
    
    \paragraph{Impurity-based algorithms fail to learn $f_{\pi}$.} Let $T$ be the decision tree returned by an $\G$-impurity based algorithm when learning $f_{\pi}$ on distribution $\D$. By Lemma~\ref{lem:memory-first-overlap}, $T$ must query all the free memory bits with addresses in $\Afree$ before querying any of the addressing bits. Thus, either $T$ has depth $> |\Afree| = \Omega(\eps\cdot2^k)$, or $T$ only queries the free memory bits of $f_{\pi}$.
    
    In the latter case, we may again assume without loss of generality that $T$ queries all the free memory bits $(y_a)_{a \in \Afree}$ before reaching any of its leaves, i.e., $T$ is a full binary tree of depth $|\Afree|$. Then, every leaf $\ell$ naturally specifies $2^k$ bits $(c_a)_{a \in \{0, 1\}^k}$ defined as
    \[
        c_a = \begin{cases}
            0, & a \in \Azero,\\
            1, & a \in \Aone,\\
            b, & a \in \Afree, y_a\text{ is fixed to }b\text{ on the root-to-}\ell\text{ path.}
        \end{cases}
    \]
    Let $\mu_{\ell} \coloneqq \Ex{(x, y) \sim \D}{c_{z(x)}}$. Again, the minimum possible error conditioning on reaching leaf $\ell$ is  $\min(\mu_{\ell}, 1 - \mu_{\ell})$, achieved by labeling $\ell$ with $\1{\mu_{\ell} \ge \frac{1}{2}}$. On the other hand, we have
    \begin{align*}
        \mu_{\ell}
    &\ge \pr{(x, y) \sim \D}{z(x) \in \Aone}\\
    &\ge |\Aone|\cdot(2^{-k} - 5^{-k}) \tag{Lemma~\ref{lem:almost-uniform-overlap}}\\
    &\ge  \frac{2^k - |\Afree|}{2}\cdot 2^{-(k+1)} \tag{$2|\Aone| + |\Afree| = 2^k$}\\
    &\ge \frac{2^k - 2^{k-1}}{2}\cdot 2^{-(k+1)} \tag{$|\Afree| \le \eps\cdot 2^{k-1} \le 2^{k-1}$}
    = \frac{1}{8},
    \end{align*}
    and a similar calculation shows $\mu_{\ell} \le \frac{7}{8}$. We conclude that the error of the decision tree $T$ over distribution $\D$ is at least $\frac{1}{8} = \Omega(1)$.
\end{proof}

\section{Conclusion}

We have constructed target functions for which greedy decision tree learning heuristics fail badly, even in the smoothed setting.  Our lower bounds complement and strengthen the parity-of-two-features example discussed in the introduction, which showed that these heuristics fail badly in the non-smoothed setting.  

It can be reasonably argued that real-world data sets do not resemble the target functions considered in this paper or the parity-of-two-features example.  Perhaps the sought-for guarantee $(\diamondsuit)$, while false for certain target functions even in the smoothed setting, is nonetheless true for broad and natural classes of targets?  It would be interesting to reexamine, through the lens of smoothed analysis, provable guarantees for restricted classes of functions that have been established.  For example, can the guarantees of~\cite{BLT-ITCS,BLT-ICML} for monotone target functions and product distributions be further strengthened in the smoothed setting? The target functions considered in this paper, as well as the parity-of-two-features example, are non-monotone.

\section*{Acknowledgements}

We thank the anonymous reviewers, whose suggestions have helped improved this paper. 

\bibliographystyle{alpha}
\bibliography{main}

\end{document}